\pdfoutput=1
\documentclass{article}

\usepackage[affil-it]{authblk}
\usepackage[utf8]{inputenc} 
\usepackage[T1]{fontenc}    

\usepackage[bitstream-charter, cal=cmcal]{mathdesign}
\usepackage{amsmath}
\usepackage[scaled=0.92]{PTSans}

\usepackage[
  paper  = letterpaper,
  left   = 1.35in,
  right  = 1.35in,
  top    = 1.0in,
  bottom = 1.0in,
  ]{geometry}

\usepackage[usenames,dvipsnames,table]{xcolor}
\definecolor{shadecolor}{gray}{0.9}

\usepackage[final,expansion=alltext]{microtype}
\usepackage[english]{babel}
\usepackage[parfill]{parskip}
\usepackage{afterpage}
\usepackage{framed}

{\endMakeFramed}

\usepackage{lineno}

\usepackage{ragged2e}

\newcounter{parcount}

\usepackage{graphicx}
\usepackage{wrapfig}
\usepackage[labelfont=bf,font=footnotesize,width=.9\textwidth]{caption}
\usepackage[format=hang]{subcaption}

\usepackage{enumitem} 

\usepackage{booktabs,multirow,multicol}       
\usepackage[separate-uncertainty=true,multi-part-units=single]{siunitx} 

\usepackage[algoruled]{algorithm2e}
\usepackage{listings}
\usepackage{fancyvrb}
\fvset{fontsize=\normalsize}

\usepackage{natbib}

\usepackage[colorlinks,linktoc=all]{hyperref}
\usepackage[all]{hypcap}
\hypersetup{citecolor=MidnightBlue}
\hypersetup{linkcolor=MidnightBlue}
\hypersetup{urlcolor=MidnightBlue}

\usepackage[nameinlink,capitalise]{cleveref}

\usepackage{bibunits}
\defaultbibliographystyle{abbrvnat} 
\defaultbibliography{refs}

\usepackage{url}

\usepackage[acronym, nowarn]{glossaries}

\lstdefinestyle{mystyle}{
    commentstyle=\color{OliveGreen},
    keywordstyle=\color{BurntOrange},
    numberstyle=\tiny\color{black!60},
    stringstyle=\color{MidnightBlue},
    basicstyle=\ttfamily,
    breakatwhitespace=false,
    breaklines=true,
    captionpos=b,
    keepspaces=true,
    numbers=left,
    numbersep=5pt,
    showspaces=false,
    showstringspaces=false,
    showtabs=false,
    tabsize=2
}
\usepackage{microtype} 

\usepackage{url}
\usepackage{multirow}
\usepackage{booktabs}
\usepackage{amsfonts}
\usepackage{nicefrac}
\usepackage{microtype}
\usepackage{float}
\usepackage{makecell}

\usepackage{centernot}
\usepackage{amsthm}
\usepackage{nicefrac}       
\usepackage{mathtools}
\usepackage{amsbsy}
\usepackage{amstext}
\usepackage{amsthm}
\usepackage{thmtools}
\usepackage{thm-restate}

\usepackage{bm}

\usepackage{dsfont}

\begingroup
    \makeatletter
    \@for\theoremstyle:=definition,remark,plain\do{%
        \expandafter\g@addto@macro\csname th@\theoremstyle\endcsname{%
            \addtolength\thm@preskip\parskip
            }%
        }
\endgroup

\crefname{lemma}{lemma}{lemmas}
\Crefname{lemma}{Lemma}{Lemmas}
\crefname{thm}{theorem}{theorems}
\Crefname{thm}{Theorem}{Theorems}
\crefname{prop}{proposition}{propositions}
\Crefname{prop}{Proposition}{Propositions}
\crefname{assumption}{assumption}{assumptions}
\crefname{assumption}{Assumption}{Assumptions}

\renewcommand{\mid}{~\vert~}

\usepackage{booktabs,arydshln}
\makeatletter
\def\adl@drawiv#1#2#3{%
        \hskip.5\tabcolsep
        \xleaders#3{#2.5\@tempdimb #1{1}#2.5\@tempdimb}%
                #2\z@ plus1fil minus1fil\relax
        \hskip.5\tabcolsep}
\newcommand{\cdashlinelr}[1]{%
  \noalign{\vskip\aboverulesep
           \global\let\@dashdrawstore\adl@draw
           \global\let\adl@draw\adl@drawiv}
  \cdashline{#1}
  \noalign{\global\let\adl@draw\@dashdrawstore
           \vskip\belowrulesep}}
\makeatother

\renewcommand{\epsilon}{\varepsilon}

\declaretheorem[style=plain,numberwithin=section,name=Theorem]{theorem}

\declaretheorem[style=plain,sibling=theorem,name=Proposition]{proposition}

\declaretheorem[style=definition,name=Assumption]{assumption}
\declaretheorem[style=definition,sibling=theorem,name=Example]{example}

\newenvironment{example*}
 {\pushQED{\qed}\example}
 {\popQED\endexample}
\numberwithin{equation}{section}

\DeclareMathOperator*{\argmin}{arg\,min}
\definecolor{WowColor}{rgb}{.75,0,.75}
\definecolor{SubtleColor}{rgb}{0,0,.50}

\newcounter{margincounter}

\DeclarePairedDelimiterX\Set[1]{\lbrace}{\rbrace}%
{  #1 }

\newcommand{\papertitle}{Supercharging Bayesian Inference with Reliable AI-Informed Priors}

\title{\papertitle}

\author[1]{Jongwoo Choi}
\author[2]{Sean O'Hagan}
\affil[1]{Department of Statistics, University of Connecticut}
\affil[2]{Department of Statistics, University of Chicago}

\begin{document}

\maketitle

\begin{bibunit}
\begin{abstract}
Modern predictive systems encode beliefs that can act as useful prior information for statistical inference in data-limited settings. Using them for prior construction introduces a tradeoff: an informative prior built from a predictive model can sharpen inference from limited data, but also risks propagating error from the model into the posterior. We propose a framework for AI-informed prior elicitation that mitigates this tension by rectifying the AI-induced law that generates synthetic data before using it to inform a prior. The rectified law can be embedded into synthetic data-driven prior elicitation techniques, including as a base measure in a Dirichlet process (DP) prior on the data-generating process. We refer to the resulting prior and corresponding posterior as the \emph{rectified AI prior} and \emph{rectified AI posterior}. We establish Gaussian asymptotics for the rectified AI posterior under non-vanishing prior strength and derive a first-order expression for its centering bias. Our rectified AI priors substantially reduce bias compared to standard approaches, improve the coverage of credible intervals, and make AI-powered prior information more reliable. We additionally apply the rectified AI prior to a real skin disease classification task and show that it can meaningfully boost predictive performance.
\end{abstract}

\section{Introduction}

Machine learning systems are often capable of producing outputs that are informative about quantities of scientific interest, as evidenced by recent advancements in protein structure prediction~\citep{jumper2021highly}, weather forecasting~\citep{lam2023learning}, and genomics~\citep{vaishnav2022evolution}. Utilizing these outputs in data-limited settings can allow for substantial improvements in statistical efficiency that may lead to discoveries. This is especially relevant in applications where outcome variables are defined via human annotation, where gold-standard labels are scarce. In many such cases, off-the-shelf predictive systems can impute these labels with impressive accuracy \citep{ziems2024can}.

The Bayesian perspective intends to harness the predictive power contained within machine learning systems by leveraging them for prior elicitation. This approach is philosophically natural---the information that we extract from such systems is not itself a replacement for data, but rather external information about plausible structure in the underlying data-generating process~\citep{dale2026synthetic}. Indeed, we view machine learning systems as encoding beliefs that can guide inference, while observed data maintain the responsibility of updating and correcting those beliefs. 

However, this perspective introduces a fundamental tension. A prior that has been heavily informed by the outputs of a machine learning system has the potential to improve statistical inference by concentrating posterior mass in plausible regions of the parameter space, reducing posterior variance and yielding sharper inferences from limited data. On the other hand, such a prior is vulnerable to systematic error if the beliefs of the ML system are miscalibrated. The crux of the matter is then how to incorporate AI systems in prior construction in a manner that can both preserve the variance reduction boon of an informative prior without sacrificing inferential reliability.

In this article, we develop methodology for AI-informed prior elicitation that addresses this tension. We consider the following desiderata. Primarily, the framework for prior construction should be simple, flexible, and easily allow the practitioner to incorporate a variety of ML systems. Posterior sampling should be computationally convenient and not reliant on restrictive assumptions about the predictive system. In addition, the informed priors that are produced should improve statistical efficiency, but in such a way that controls the bias introduced by the ML system and yields reliable downstream inference and predictions. 
We address this by \emph{rectifying} the AI-induced law that generates the synthetic data itself, making it more reliable for calibrating a prior in downstream inference and prediction tasks. This rectified law itself can then be incorporated in data-driven prior constructions, including the power prior~\citep{chen2000power}, expected-posterior prior~\citep{perez2002expected}, catalytic prior~\citep{huang2025catalytic}, and AI prior~\citep{o2025ai}. While our rectified synthetic data-generating process can be incorporated into any prior leveraging synthetic data, we prioritize using it as the base measure of a Dirichlet process (DP) prior on the data-generating process---termed the \emph{AI prior}---due to its versatility and computational convenience for posterior sampling.

We show that rectifying the AI-based synthetic data-generating law allows us to construct informed priors that retain inferential reliability. In addition, these priors can meaningfully boost performance in downstream predictive tasks. Our approach allows us to avoid computationally intensive calibration procedures for the prior influence that trade off variance reduction with bias reduction, and allows us to instead achieve variance reduction without subjecting ourselves to the same level of systematic bias. \cref{fig:demo} highlights the effect of our rectified AI prior when compared to the raw AI prior when estimating quantiles of gene expression data. 

Our contributions are as follows: 
In~\cref{sec:rectified-base-measure}, we introduce our procedure for constructing rectified AI-informed priors. 
We establish asymptotics showing when the resulting prior remains reliable under nonvanishing prior strength. In~\cref{sec:rectifier-taxonomy}, we provide practical guidance on how to rectify the AI base measure in certain classes of problems. 
We then provide empirical verification of our methodology in~\cref{sec:experiments}, where we apply our rectified AI priors to infer quantiles of gene expression levels and of a regression coefficient on census data. In addition, we show that our rectified AI prior can substantially improve predictive power in a skin-disease classification problem. 

\begin{figure}
    \centering
    \includegraphics[width=\linewidth]{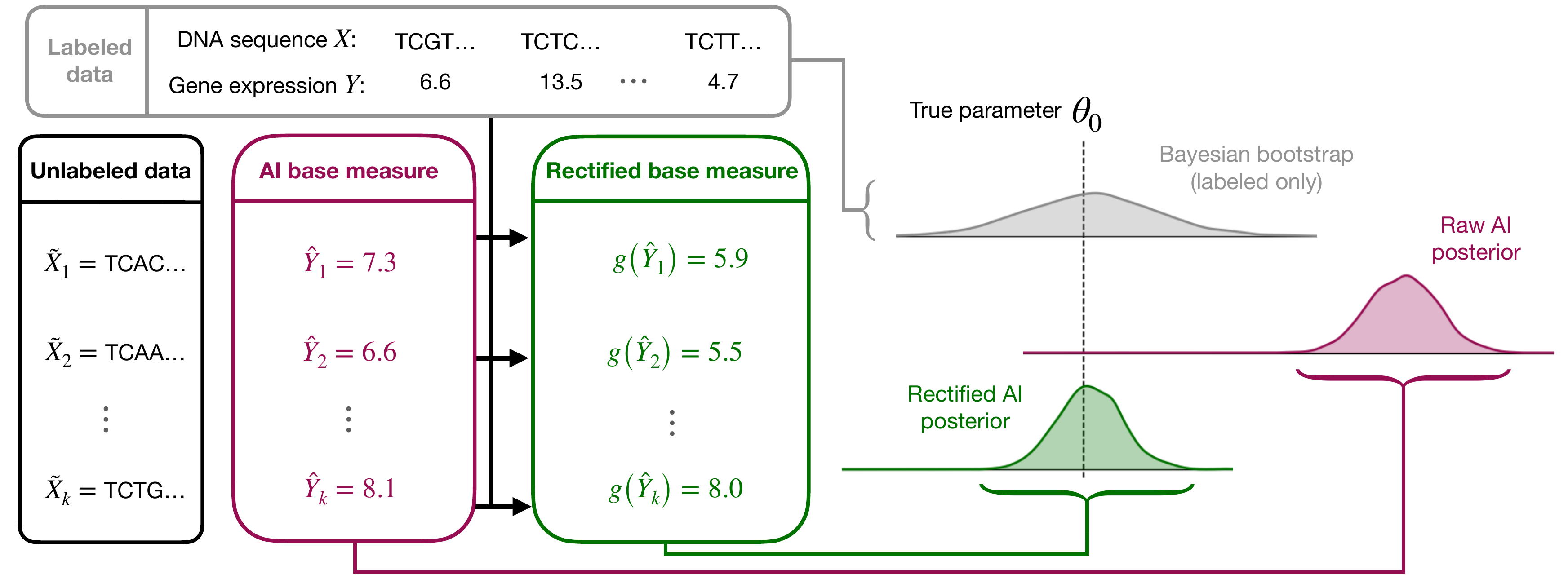}
    \caption{Rectifying the base measure yields a posterior that provides more accurate inference (removing bias), while retaining statistical efficiency advantages when compared to objective Bayes procedures like the Bayesian bootstrap.}
    \label{fig:demo}
\end{figure}

\section{Preliminaries: AI-Powered Bayesian Inference}\label{sec:prelim}

We briefly review the AI-powered Bayesian inference framework. Consider the data-generating process $\mathcal{D}_n:=(X_1,Y_1),\ldots,(X_n,Y_n)\overset{\mathrm{iid}}{\sim}P_0$, where $X_1\in\mathcal{X}$, $Y_1\in\mathcal{Y}$, and $P_0$ denotes the true data-generating distribution on $\mathcal{X}\times\mathcal{Y}$. For any probability measure $P$ and measurable function $h$, write $P h = \int h(x,y) \, \mathrm{d}P(x,y)$. We are interested in Bayesian statistical inference on a functional $\theta(P_0)$ based on the independent sample. We have access to an auxiliary probability distribution $P_{\mathrm{AI}}$ capable of generating samples in $\mathcal{X}\times\mathcal{Y}$. In practice, we hope that the samples arriving from $P_{\mathrm{AI}}$ resemble those from $P_0$ so that they may aid in inference for $\theta(P_0)$. 

The law $P_{\mathrm{AI}}$ is often induced through a machine learning (or generative AI) system. In many cases, it admits a factorization $P_{\mathrm{AI}}(\mathrm{d}x,\mathrm{d}y)=P_{\mathrm{AI}}^X(\mathrm{d}x) P_{\mathrm{AI}}^Y(\mathrm{d}y\mid x)$. A common special case takes $P_{\mathrm{AI}}^X$ to be the empirical distribution of an available sample of unlabeled covariates, while the conditional law $P_{\mathrm{AI}}^Y(\cdot\mid x)$ is induced by a predictive model. When $P_{\mathrm{AI}}^Y(\cdot\mid x)$ is degenerate, this recovers the setting of \emph{inference with predicted data} (equivalently, that of \emph{prediction-powered inference}~\citep{angelopoulos2023prediction}).

AI-powered Bayesian inference~\citep{o2025ai} leverages $P_{\mathrm{AI}}$ to elicit an informative prior on the data-generating process $P_0$, which will in turn aid in inference on $\theta(P_0)$. In particular, we have
\begin{align*}
    P_0\sim\mathrm{DP}(\alpha,P_{\mathrm{AI}})\\
    (X_1,Y_1),\ldots,(X_n,Y_n)&\overset{\mathrm{iid}}{\sim}P_0 
\end{align*}
where $\mathrm{DP}(a,G)$ denotes the Dirichlet process with concentration parameter $a$ and base measure $G$. When $\theta(P_0)$ is a population risk minimizer, i.e. $\theta(P_0) \in \argmin_{\theta\in\Theta}P_0\ell_\theta$ where $\ell_\theta:\mathcal{X}\times\mathcal{Y}\to\mathbb{R}$ is a loss function, then the posterior bootstrap~\citep{fong2019scalable} allows for embarrassingly parallel sampling from the posterior $\theta(P_0)\mid\mathcal{D}_n$. If $P_{\mathrm{AI}}$ is an empirical distribution on $(\tilde{X}_1,\tilde{Y}_1),\ldots,(\tilde{X}_k,\tilde{Y}_k)$, then sampling is exact by leveraging conjugacy of the DP prior via 
\[
\argmin_{\theta\in\Theta}\left\{\sum_{i=1}^n w_i \ell_\theta(X_i,Y_i)+\sum_{j=1}^k \tilde{w}_j \ell_\theta\big(\tilde{X}_j,\tilde{Y}_j\big)\right\}\sim \theta(P_0)\mid \mathcal{D}_n
\]
where $(w_1,\ldots,w_n,\tilde{w}_1,\ldots,\tilde{w}_k)\sim\mathrm{Dirichlet}(1,\ldots,1,\alpha/k,\ldots,\alpha/k)$. When $P_{\mathrm{AI}}$ is not a finite mixture of atoms, approximate sampling is analogously available by using a finite truncation.

The concentration parameter $\alpha$ represents prior strength and can be calibrated for inferential reliability based on an adapted version of the generalized posterior calibration algorithm~\citep{syring2019calibration}. This amounts to scaling how much influence to assign the AI base measure, and can result in substantially underusing an otherwise informative AI base measure if it is subject to systematic but correctable error.

\section{The Rectified AI Base Measure}\label{sec:rectified-base-measure}
A challenge in AI-informed prior elicitation is not just to borrow information from an AI-induced law, but to ensure that the law being borrowed from is reliable for the inferential target. When the AI-induced law is systematically biased, a prior with non-vanishing strength can shift the posterior distribution of the induced functional $\theta(P)$ away from the target parameter $\theta_0 = \theta(P_0)$. To mitigate this bias, we propose using labeled calibration data to construct a rectified base measure. Centering a DP prior at this rectified base measure rather than at the raw AI base measure can reduce the induced bias while retaining the efficiency gains from AI-powered prior information.

The manner by which we construct a rectified base measure may depend on the dimensionality and structure of the data, as well as on whether the parameter of interest is a functional of the marginal distribution of $Y$ or depends on the conditional generating process of $Y$ given $X$. We describe a general prescription below, and defer our practical recommendations to \cref{sec:rectifier-taxonomy}.

A \emph{rectifier} is a data-dependent map from the set of probability measures on $\mathcal{X}\times\mathcal{Y}$ to itself. We consider a set of candidate rectifiers $\{T_\eta : \eta \in \mathcal H\}$. For a given $\eta \in \mathcal H$, the corresponding rectified base measure is $P_\eta = T_\eta (P_{\mathrm{AI}})$. Typically, we select a rectifier using a labeled calibration sample of size $m$, i.e.
    $\hat{\eta}_m\in\arg\min_{\eta\in\mathcal{H}} m^{-1}\sum_{i=1}^m L_{\eta}(X_i,Y_i)$
where $L_{\eta}$ is some fitting criterion that rewards values of $\eta$ that make $P_\eta$ better resemble $P_0$ for estimating $\theta$.
We write $P_{\hat\eta_m} = T_{\hat \eta_m} (P_{\mathrm{AI}})$ for the fitted rectified base measure. 

In particular, suppose the parameter of interest $\theta_0$ is identified by an estimating equation $P_0 g_{\theta_0}=0$, where $g_\theta:\mathcal{X}\times\mathcal{Y}\to\mathbb{R}^d$ could be a gradient or subgradient of a convex loss $\ell_\theta$. A fitted rectified base measure $P_{\hat\eta_m}$ induces a corresponding estimating equation $P_{\hat{\eta}_m}g_\theta=0$.
The natural goal of rectification for the parameter $\theta_0$ is therefore to make the estimating equation induced by $P_{\hat{\eta}_m}$ agree with that under $P_0$.
Accordingly, our goal is to choose a candidate rectifier such that $\|(P_0-P_{\hat{\eta}_m})g_\theta\|$ is small, in particular for $\theta$ near $\theta_0$. 
\cref{sec:theory} formalizes this idea by showing that the posterior bootstrap limit is preserved after rectification, while the leading centering bias is controlled by the score discrepancy $(P_0-P_{\hat{\eta}_m})g_\theta$.

\subsection{Theory}\label{sec:theory}
The target parameter is defined as a minimizer of a population risk $\theta_0 \in \argmin_{\theta\in\Theta} P_0\ell_\theta$, where $\Theta \subset \mathbb{R}^d$ is the parameter space and $\ell_\theta = \ell(\theta, X, Y)$. For a fixed $\gamma >0$, define the rectified oracle target and the rectified empirical risk minimizer respectively as 
\begin{equation*}
    \theta_{0,m}^{\gamma} \in \argmin_{\theta\in\Theta}\left\{P_0\ell_\theta+\gamma P_{\hat{\eta}_m}\ell_\theta\right\}, \qquad \hat{\theta}_{n,m}^\gamma \in \argmin_{\theta\in\Theta}\left\{P_n\ell_\theta + \gamma P_{\hat{\eta}_m}\ell_\theta\right\}.
\end{equation*}
Here, $P_n$ represents the empirical distribution of the data sample of size $n$: $P_n h = \tfrac{1}{n}\sum_{i=1}^n h(X_i, Y_i)$.
To avoid technical complications from reusing the same labels twice, the theory is stated under sample splitting: the observations used to construct $P_{\hat{\eta}_m}$ are independent of those used to form $P_n$. 

The scalar $\gamma = \alpha / n$ controls the strength of the rectified AI prior relative to the labeled inference sample, where $\alpha$ is the DP concentration parameter. Setting $\gamma=1$ gives the $P_{\hat\eta_m}$ the same total weight as the $n$ labeled observations used to form $P_n$, while larger values of $\gamma$ correspond to stronger borrowing from $P_{\hat \eta_m}$. Let $\Pi_{\mathrm{PB}}^{\hat{\eta}_m}$ denote the posterior bootstrap law induced by the prior $DP(\gamma n, P_{\hat{\eta}_m})$. For the differentiable loss, write $g_\theta = \nabla_\theta \ell_\theta$ and $\dot g_\theta = \nabla^2_\theta \ell_\theta$. 

\cref{thm1} is a conditional analogue of the posterior bootstrap theorem (Theorem 1; \citealp{o2025ai}) with $P_{\hat\eta_m}$ replacing the raw AI base measure. Sufficient conditions for consistency of the corresponding $\theta^\gamma_{0,m}$ and $\hat\theta^\gamma_{n,m}$ are given in \cref{prop1}. We state a convenient set of sufficient regularity conditions in \cref{supp:cond2}. All proofs are deferred to \cref{supp:proofs}.

\begin{theorem}\label{thm1}
    Suppose \cref{supp:cond2} holds. Define $J_1(\theta) = P_0\dot g_\theta$, $J_{2,m}(\theta) = P_{\hat\eta_m}\dot g_\theta$, and $I_1(\theta) = P_0(g_\theta g_\theta^\mathtt{T})$, $I_{2,m}(\theta) = P_{\hat\eta_m}(g_\theta g_\theta^\mathtt{T})$. Let 
    \begin{equation*}
        J_m := \frac{1}{1 + \gamma} \big\{ J_1(\theta_{0,m}^\gamma) + \gamma J_{2,m}(\theta_{0,m}^\gamma) \big\}, \quad I_m := \frac{1}{1 + \gamma} \big\{ I_1(\theta_{0,m}^\gamma) + \gamma I_{2,m}(\theta_{0,m}^\gamma) \big\}.
    \end{equation*}
    If $Z_m \mid P_{\hat \eta_m} \sim \mathcal N(0, J_m^{-1}I_m J_m^{-1})$, then, conditionally on the calibration sample, for every Borel set $A \subset \mathbb R^d$ such that $P(Z_m \in \partial A \mid P_{\hat \eta_m}) =0$, we have
    \begin{equation*}
        \Pi_{\mathrm{PB}}^{\hat{\eta}_m} \Big[ \sqrt{n(1+\gamma)} \big( \theta - \hat{\theta}_{n,m}^\gamma \big) \in A \Big] \xrightarrow[n,m \to \infty]{\mathrm{p}} P(Z_m \in A \mid P_{\hat \eta_m}).
    \end{equation*}

    When $P_{\hat\eta_m}$ is represented by a finite artificial sample size $M$, the same conclusion holds with the finite-base analogues, under the regime for $M$ used in \citet{o2025ai}.
\end{theorem}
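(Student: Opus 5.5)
Conditioning on the calibration sample freezes $P_{\hat\eta_m}$ into a fixed (non-random) base measure $G_m := P_{\hat\eta_m}$. Sample splitting makes this legitimate: given the calibration sample, $\mathcal D_n$ is still i.i.d.\ from $P_0$. The posterior bootstrap law $\Pi_{\mathrm{PB}}^{\hat\eta_m}$ is then exactly the law of
\[
\tilde\theta=\argmin_\theta\Big\{\textstyle\sum_i w_i\ell_\theta(X_i,Y_i)+ \tilde w\,\tilde G\ell_\theta\Big\},
\]
where $(w_1,\dots,w_n,\tilde w)\sim\mathrm{Dir}(1,\dots,1,\gamma n)$ and $\tilde G\sim \mathrm{DP}(\gamma n, G_m)$. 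So the plan is to rerun the proof of Theorem~1 of \citet{o2025ai} conditionally, with $P_{\mathrm{AI}}$ replaced by $G_m$. The one extra requirement is that every constant and remainder bound in that proof be uniform over the possible realizations of $G_m$. This is why the conclusion holds in probability as $n,m\to\infty$ jointly, rather than for a fixed base measure.

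\textbf{Key steps.}
\begin{enumerate}
\item \emph{Normalization.} Represent the Dirichlet weights as normalized Gammas, $w_i=\xi_i/S$ with $\xi_i\sim\mathrm{Exp}(1)$. The weighted objective divided by $n(1+\gamma)$ becomes
\[
\tfrac{1}{1+\gamma}\big\{\mathbb P_n^\xi\ell_\theta+\gamma \tilde G\ell_\theta\big\}\cdot(1+o_p(1)),
\]
where $\mathbb P_n^\xi$ is the exponentially weighted empirical measure.
\item \emph{Consistency.} Using the uniform convergence conditions in \cref{supp:cond2} and the consistency of $\theta_{0,m}^\gamma$ and $\hat\theta^\gamma_{n,m}$ from \cref{prop1}, show that $\tilde\theta-\theta_{0,m}^\gamma\to0$ and $\hat\theta^\gamma_{n,m}-\theta^\gamma_{0,m}\to 0$ in probability.
\item \emph{Linearization.} Taylor expand the first-order condition around $\theta^\gamma_{0,m}$, using $P_0 g+\gamma G_m g=0$ there. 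This gives
\[
\sqrt{n(1+\gamma)}(\tilde\theta-\theta^\gamma_{0,m}) = -J_m^{-1}\tfrac{1}{\sqrt{n(1+\gamma)}}\Big\{\textstyle\sum_i \xi_i g(X_i,Y_i)+\gamma n\,\tilde G g\Big\}+o_p(1),
\]
and the same expansion with $\xi\equiv1$ and $\tilde G=G_m$ for $\hat\theta^\gamma_{n,m}$.
\item \emph{CLT for the difference.} Subtract the two expansions. What remains is $\sum_i(\xi_i-1)g_i$ plus $\gamma n(\tilde G-G_m)g$. Conditionally on the data, the first term has covariance approximately $nI_1$. The DP fluctuation $\sqrt{\gamma n}(\tilde G-G_m)g$ has covariance approximately $\gamma n\,\mathrm{Var}_{G_m}(g)$. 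Since $G_m g=-P_0g/\gamma$ at $\theta^\gamma_{0,m}$, the centering terms combine so that the total covariance reduces to $(1+\gamma)nI_m$, as in the original theorem. A conditional Lindeberg CLT then gives convergence to $\mathcal N(0,J_m^{-1}I_mJ_m^{-1})$ in probability.
\item \emph{Portmanteau.} Pass from weak convergence to the stated sets $A$ whose boundary has $Z_m$-probability zero. Because the limit law itself depends on $m$, argue via bounded-Lipschitz distance to a sequence of Gaussians whose covariances are bounded and nondegenerate.
\end{enumerate}
The finite-$M$ case repeats this with Dirichlet weights $\gamma n/M$ on atoms. In that case the DP fluctuation term is replaced by a multinomial-Dirichlet term, and the regime for $M$ from \citet{o2025ai} ensures its covariance matches.

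\textbf{Main obstacle.} The hardest step is uniformity in $m$. The limit covariance moves with $m$, so one must show that the remainder terms in the Taylor expansion and the Lindeberg conditions hold uniformly over $G_m$. I would first try to obtain this from the conditions in \cref{supp:cond2}, specifically:
\begin{itemize}
\item local Lipschitz control of $\dot g_\theta$, uniformly over $G_m$;
\item eigenvalues of $J_m$ bounded away from zero with probability tending to one;
\item a uniform $2+\delta$ moment of $g_\theta$ under $G_m$.
\end{itemize}
With these, each convergence statement holds on events whose calibration-probability tends to one, which yields the "in probability" conclusion.
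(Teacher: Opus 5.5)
Your overall strategy is the paper's. You condition on the calibration sample, so that $G_m=P_{\hat\eta_m}$ is a fixed base measure independent of $\mathcal D_n$, and then apply the posterior bootstrap theorem of \citet{o2025ai} with $G_m$ in place of the AI base measure. The paper does only this, noting that $J_m,I_m$ are that theorem's $J,I$ for the mixture $\frac{1}{1+\gamma}P_0+\frac{\gamma}{1+\gamma}G_m$.

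The trouble is your explicit re-derivation, which produces the wrong limiting covariance. In Step~1 you freeze the weight of the prior component. Write $\tilde w=\xi_0/S$ with $\xi_0\sim\mathrm{Gamma}(\gamma n,1)$ independent of $\tilde G$. You replace $\xi_0\,\tilde G\ell_\theta$ by $\gamma n\,\tilde G\ell_\theta$ and absorb the difference into a $(1+o_p(1))$ factor. That is not a common rescaling of the objective, since it affects only the prior component. Moreover, $\xi_0-\gamma n=O_p(\sqrt n)$ multiplies $\tilde G g_{\theta^\gamma_{0,m}}\approx G_m g_{\theta^\gamma_{0,m}}=-P_0g_{\theta^\gamma_{0,m}}/\gamma$. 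This is nonzero whenever $P_0$ and $G_m$ have different risk minimizers, which is precisely the biased-base-measure situation. Consequently:
\begin{itemize}
\item your Step~3 expansion is off by an $O_p(1)$ term;
\item the two terms in Step~4 have conditional covariance $nI_1+\gamma n\,\mathrm{Var}_{G_m}(g)=(1+\gamma)nI_m-\gamma n\,(G_mg)(G_mg)^{\top}$, not $(1+\gamma)nI_m$.
\end{itemize}
The identity $P_0g+\gamma G_mg=0$ cannot repair this. It removes the mean-squared term when you compute the covariance of the normalized Dirichlet-weighted score directly, but it cannot restore a term your decomposition has dropped.

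If you keep $\xi_0$ random, the difference of the two score expansions is
\[
\sum_{i=1}^n(\xi_i-1)g_i+\xi_0(\tilde G-G_m)g+(\xi_0-\gamma n)\,G_mg .
\]
Its three pieces are conditionally uncorrelated, with covariances $\sum_i g_ig_i^{\top}\approx nI_1$, $\gamma n\,\mathrm{Var}_{G_m}(g)$ and $\gamma n\,(G_mg)(G_mg)^{\top}$. The last two add to $\gamma n\,G_m(gg^{\top})=\gamma nI_{2,m}$, giving $(1+\gamma)nI_m$. Put differently, you kept the fluctuation of the data weights, which is why you correctly got the uncentered $I_1$. You dropped the fluctuation of the prior weight, which is why you got a centered second moment where the theorem has the uncentered $I_{2,m}$.

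Two smaller points. First, Step~2 should not invoke \cref{prop1}. Its hypothesis $\sup_\theta|P_{\hat\eta_m}\ell_\theta-P_0\ell_\theta|=o_p(1)$ is not part of \cref{supp:cond2}, and it is not needed because the centering here is $\theta^\gamma_{0,m}$, not $\theta_0$; \cref{supp:cond2}(b) is the relevant condition. Second, the uniform-in-$m$ moment and Lipschitz bounds you list do not follow from \cref{supp:cond2}, whose dominating functions may depend on the calibration sample. The paper does not derive such uniformity either; it simply applies the fixed-base-measure theorem conditionally. So requiring these bounds means adding assumptions, not extracting them.
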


\cref{thm1} describes the shape of the posterior bootstrap distribution around $\hat\theta_{n,m}^\gamma$. However, the inferential usefulness of this posterior still depends on whether the corresponding mixed oracle target $\theta_{0,m}^\gamma$ is close to $\theta_0$. The next result shows that the leading centering bias is governed by the generalized score discrepancy $(P_0 - P_{\hat\eta_m}) g_{\theta_0}$, which we refer to from now on as score discrepancy. Hence, the goal of rectification is not necessarily to recover $P_0$; rather, it is enough to ensure that 
$P_{\hat\eta_m} g_{\theta_0} \approx P_0 g_{\theta_0} = 0$.

\begin{assumption}\label{cond1} 
Assume the following.
\begin{enumerate}
    \item[(a)] The target parameter is an interior point of the parameter space, i.e., $\theta_0 \in \mathrm{int}(\Theta)$ and satisfies the population score equation:  $P_0g_{\theta_0} = 0$.
    \item[(b)] There exists a ball $B(\theta_0, \delta) \subset \mathrm{int}(\Theta)$ for some $\delta >0$ such that $g_\theta$ is differentiable on $B(\theta_0, \delta)$, $J_0 := P_0 \dot g_{\theta_0}$ is nonsingular, the map $\theta \mapsto P_0 \dot g_\theta$ is continuous at $\theta_0$, and $\sup_{\theta \in B(\theta_0, \delta)} \| (P_{\hat\eta_m} - P_0)\dot g_\theta \| = o_p(1)$.
    \item[(c)] For some deterministic sequence $r_m \to 0$, $ \| (P_{\hat\eta_m} - P_0) g_{\theta_0} \| = O_p(r_m)$.
\end{enumerate}
\end{assumption}

\begin{theorem}\label{thm2}
Suppose \cref{cond1} holds and $\theta_{0,m}^\gamma \xrightarrow[]{\mathrm{p}} \theta_0$ as $n,m \to \infty$. Then 
\begin{equation*}
    \theta_{0,m}^\gamma - \theta_0 = \frac{\gamma}{1 + \gamma} J_0^{-1} (P_0 - P_{\hat \eta_m}) g_{\theta_0} + o_p (r_m).
\end{equation*}
In particular, if $(P_0 - P_{\hat \eta_m}) g_{\theta_0} = O_p(m^{-1/2})$, then $\theta_{0,m}^\gamma - \theta_0 = O_p(m^{-1/2})$.
\end{theorem}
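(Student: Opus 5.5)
The plan is a standard first-order Taylor (Z-estimator) expansion of the first-order condition defining $\theta_{0,m}^\gamma$ around $\theta_0$. Since $\theta_{0,m}^\gamma \xrightarrow{\mathrm{p}} \theta_0$ and $\theta_0 \in \mathrm{int}(\Theta)$, with probability tending to one $\theta_{0,m}^\gamma$ lies in $B(\theta_0,\delta)$. On that event it is an interior minimizer of the differentiable objective $P_0\ell_\theta + \gamma P_{\hat\eta_m}\ell_\theta$, so it satisfies the score equation
\begin{equation*}
P_0 g_{\theta_{0,m}^\gamma} + \gamma P_{\hat\eta_m} g_{\theta_{0,m}^\gamma} = 0 .
\end{equation*}
Here I take differentiation under the integral as implied by the differentiability in \cref{cond1}(b).

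\textbf{Expansion.} I will apply the mean value theorem coordinatewise, or its integral form, on the segment from $\theta_0$ to $\theta_{0,m}^\gamma$, which lies inside the ball. This gives
\begin{equation*}
0 = (1+\gamma)P_0 g_{\theta_0} + \gamma (P_{\hat\eta_m}-P_0) g_{\theta_0} + \big\{\bar J_{1} + \gamma \bar J_{2,m}\big\}(\theta_{0,m}^\gamma - \theta_0).
\end{equation*}
The matrices $\bar J_1$ and $\bar J_{2,m}$ are averages of $P_0\dot g_{\tilde\theta}$ and $P_{\hat\eta_m}\dot g_{\tilde\theta}$ over intermediate points $\tilde\theta$ on the segment.

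\textbf{Jacobian limits.} By continuity of $\theta\mapsto P_0\dot g_\theta$ at $\theta_0$ and consistency, $\bar J_1 = J_0 + o_p(1)$. By the uniform condition $\sup_{B}\|(P_{\hat\eta_m}-P_0)\dot g_\theta\| = o_p(1)$, we get $\bar J_{2,m} = \bar J_1 + o_p(1) = J_0 + o_p(1)$. Hence the bracket equals $(1+\gamma)J_0 + o_p(1)$. This is invertible with probability tending to one, and its inverse is $(1+\gamma)^{-1}J_0^{-1} + o_p(1)$.

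\textbf{Solving.} Using $P_0 g_{\theta_0}=0$, solving the expansion gives
\begin{equation*}
\theta_{0,m}^\gamma - \theta_0 = \frac{\gamma}{1+\gamma}\big\{J_0^{-1} + o_p(1)\big\}(P_0-P_{\hat\eta_m})g_{\theta_0}.
\end{equation*}
By \cref{cond1}(c) the score discrepancy is $O_p(r_m)$, so the remainder is $o_p(1)\cdot O_p(r_m) = o_p(r_m)$. This is the claimed expansion. The corollary follows by taking $r_m = m^{-1/2}$: the leading term is $O_p(m^{-1/2})$ because $J_0^{-1}$ is fixed.

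\textbf{Main obstacle.} The delicate step is the Jacobian replacement. The mean value points are random and depend on $\hat\eta_m$, so a pointwise condition would not suffice. That is exactly why the uniform-over-the-ball assumption in (b) is needed, and I will invoke it on the event $\{\theta_{0,m}^\gamma\in B(\theta_0,\delta)\}$, whose probability tends to one.

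A minor technical point is that the vector-valued mean value theorem does not hold with a single intermediate point. I will handle this with the integral form
\begin{equation*}
\int_0^1 \dot g_{\theta_0 + t(\theta_{0,m}^\gamma-\theta_0)}\,dt,
\end{equation*}
interchanged with $P_0$ and $P_{\hat\eta_m}$, and bound it by the same supremum over the ball.
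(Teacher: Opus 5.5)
Your proposal is correct and follows the paper's proof essentially step for step. Both arguments take the same route: the first-order condition on the event $\{\theta_{0,m}^\gamma\in B(\theta_0,\delta)\}$, then a Taylor expansion of $P_0 g_\theta$ and $P_{\hat\eta_m} g_\theta$ around $\theta_0$ with Jacobians replaced by $J_0+o_p(1)$ via continuity at $\theta_0$ and the uniform condition in \cref{cond1}(b), then inversion and $o_p(1)\cdot O_p(r_m)=o_p(r_m)$ from \cref{cond1}(c); your integral-form mean value step is just a more careful rendering of the paper's ``Taylor expansion'' rather than a different approach.
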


The consistency condition $\theta_{0,m}^\gamma \to\theta_0$ can be verified by \cref{prop1}. \cref{thm2} makes the role of rectification explicit: for fixed $\gamma$, the leading shift of $\theta_{0,m}^{\gamma}$ away from $\theta_0$ is determined by $(P_0-P_{\hat\eta_m})g_{\theta_0}$. The factor $J_0^{-1}$ translates this into parameter bias, while $\gamma/(1+\gamma)$ reflects the strength of borrowing from the rectified base measure. When $m \asymp n$, the bias is generally on the same scale as the posterior bootstrap fluctuation in \cref{thm1}. Thus, root-$m$ rectification improves the posterior center, but a $\theta_0$-centered first-order limit would require a stronger condition such as $ (P_{\hat \eta_m} - P_0) g_{\theta_0} = o_p(n^{-1/2})$, or an additional bias-correction step.

\section{Rectifier Taxonomy}\label{sec:rectifier-taxonomy}
In this section, we discuss how to construct the rectified base measure $P_{\hat\eta_m}$ in practice. \cref{thm2} shows that the leading bias of the rectified target depends on $(P_{\hat \eta_m} - P_0) g_{\theta_0}$. Thus, a good rectifier should make the AI-induced estimating equation look more like that under the true data-generating law. This does not necessarily require learning the full distribution $P_0$. For a given downstream loss, it may be enough to correct the features of the AI base measure that enter the corresponding score.

There are two natural ways to pursue this goal. 
The first is generic rectification, which is conceptually simple and can provide stable corrections to the AI-induced law. For univariate outcomes, our default recommendation is \emph{quantile mapping}, which transports the marginal distribution of AI-imputed outcomes towards that of the calibration data while preserving their rank-ordering. Specifically, we apply the transformation $y\mapsto \hat{F}_Y^{-1} \big(\hat{F}_{\hat{Y}}(y)\big)$ to the AI-imputed outcomes, where $\hat{F}_{\hat{Y}}$ and $\hat{F}_Y^{-1}$ are the empirical distribution function and empirical quantile function of the AI-imputed and gold-standard outcomes on the calibration sample. Quantile mapping is highly effective when the main source of error is a monotone marginal miscalibration, in which case it will substantially reduce score mismatch for downstream estimands.

The second approach is targeted rectification, where the rectifier is chosen to directly align the score with respect to a particular loss with the goal of minimizing $(P_{\hat \eta_m} - P_0) g_{\theta_0}$. In practice, since $\theta_0$ is unknown, we can use a plug-in estimate $\tilde \theta$ and try to fit $\hat \eta_m$ so that $P_{\hat \eta_m} g_{\tilde \theta}$ matches $P_m g_{\tilde \theta}$. Here, $P_m$ denotes the empirical distribution of the labeled calibration sample used to fit the rectifier. If this fitting procedure yields $(P_{\hat \eta_m} - P_0) g_{\theta_0} = O_p(m^{-1/2})$, then the centering bias of the rectified target is $O_p(m^{-1/2})$. Targeted rectification is particularly convenient when the score moments $P_{\eta} g_{\theta_0}$ depend on the law $P_{\eta}$ only through a finite-dimensional moment $P_\eta h$, where score-matching reduces to finite moment matching. The following proposition gives a simple sufficient condition under which such moment matching controls the leading score discrepancy.

\begin{proposition}\label{prop:moment-matching}
Suppose that there exists $h:\mathcal{X}\times\mathcal{Y}\to\mathbb{R}^k$ and $\Gamma:\mathbb{R}^k\to\mathbb{R}^{d}$ such that $P g_{\theta_0}=\Gamma \left(P h\right)$ for every $P\in\{P_0\}\cup\{P_\eta:\eta\in\mathcal{H}\}$ and $P_0 \|h\|^2<\infty$. Suppose also that $\Gamma$ is  Lipschitz, or that it is locally Lipschitz within a neighborhood of $P_0h$ and the fitted rectifier satisfies $P_{\hat{\eta}_m}h-P_0h=o_p(1)$. Given these conditions, as long as the fitted rectifier satisfies $P_{\hat{\eta}_m}h-P_mh=O_p(m^{-1/2})$, then $(P_{\hat{\eta}_m}-P_0)g_{\theta_0}=O_p(m^{-1/2})$.  
\end{proposition}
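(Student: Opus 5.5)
The plan is to reduce the score discrepancy to a moment discrepancy through the representation $Pg_{\theta_0}=\Gamma(Ph)$, and then bound the moment discrepancy with a triangle inequality through the calibration empirical measure $P_m$. Since both $P_0$ and $P_{\hat\eta_m}$ lie in the class on which the representation holds, we can write
\[
(P_{\hat\eta_m}-P_0)g_{\theta_0}=\Gamma\big(P_{\hat\eta_m}h\big)-\Gamma\big(P_0h\big).
\]
Here $\hat\eta_m$ is random, but the identity holds pointwise for every $\eta\in\mathcal H$, so it holds for every realization. The goal is therefore $\|\Gamma(P_{\hat\eta_m}h)-\Gamma(P_0h)\|=O_p(m^{-1/2})$.

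The second step is the moment bound. We split
\[
P_{\hat\eta_m}h-P_0h=(P_{\hat\eta_m}h-P_mh)+(P_mh-P_0h).
\]
The first term is $O_p(m^{-1/2})$ by hypothesis. For the second term, $P_mh$ is an average of $m$ i.i.d.\ copies of $h(X,Y)$ with $P_0\|h\|^2<\infty$. Chebyshev's inequality applied coordinatewise, or equivalently $E\|P_mh-P_0h\|^2\le P_0\|h\|^2/m$, gives $P_mh-P_0h=O_p(m^{-1/2})$. Hence $\Delta_m:=P_{\hat\eta_m}h-P_0h=O_p(m^{-1/2})$.

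The third step transfers this bound through $\Gamma$, with one case for each assumption.
\begin{itemize}
\item If $\Gamma$ is globally Lipschitz with constant $L$, then $\|\Gamma(P_{\hat\eta_m}h)-\Gamma(P_0h)\|\le L\|\Delta_m\|=O_p(m^{-1/2})$ immediately.
\item If $\Gamma$ is only Lipschitz with constant $L$ on a ball $B(P_0h,\rho)$, define the event $E_m=\{\|\Delta_m\|<\rho\}$. By the assumed consistency $\Delta_m=o_p(1)$ (which also follows from $\Delta_m=O_p(m^{-1/2})$), we have $P(E_m)\to1$.
\end{itemize}
In the local case, for any $\varepsilon>0$ choose $C$ with $P(\|\Delta_m\|>C m^{-1/2})<\varepsilon/2$ for all large $m$. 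Then
\[
P\big(\|\Gamma(P_{\hat\eta_m}h)-\Gamma(P_0h)\|>LCm^{-1/2}\big)\le P(E_m^c)+P\big(\|\Delta_m\|>Cm^{-1/2}\big),
\]
which is below $\varepsilon$ eventually. This gives the claimed $O_p(m^{-1/2})$ rate.

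The main obstacle is this localization step in the locally Lipschitz case. One has to keep $P_{\hat\eta_m}h$ inside the neighborhood where the Lipschitz bound holds, which is exactly what the added consistency hypothesis provides. The rest is routine. The only other points to check are measurability, namely that $P_{\hat\eta_m}h$ is a well-defined random vector, and that the representation holds for the data-dependent $\hat\eta_m$; the latter holds because it holds uniformly over $\mathcal H$.
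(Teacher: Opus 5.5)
Your proposal is correct and follows the paper's proof essentially step for step: it rewrites the score discrepancy as $\Gamma(P_{\hat\eta_m}h)-\Gamma(P_0h)$, applies the (local) Lipschitz bound, and uses the triangle inequality through $P_mh$. The differences are only refinements of the same argument: Chebyshev replaces the paper's multivariate CLT, the localization event gets explicit $\varepsilon$-handling, and you correctly note that the extra consistency hypothesis already follows from the $O_p(m^{-1/2})$ bound.
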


\cref{prop:moment-matching} shows that matching the finite-dimensional empirical moment $P_mh$ is sufficient to control the leading score discrepancy. The sufficient conditions given are also related to the exponential family, as whenever the loss $\ell_\theta$ is the kernel of an exponential family likelihood, the score $g_\theta$ is affine in the sufficient statistic, implying that such a function $h$ exists and that $\Gamma$ is globally Lipschitz. Below, we discuss some specific examples. 

\begin{example} \textbf{(Mean estimation).}
Let $\ell_\theta(x,y)=(y-\theta)^2/2$. Then $g_\theta(y)=\theta-y$ so at $\theta_0$, $Pg_\theta=\theta_0-PY$. We take $h(y)=y$, $\Gamma(u)=\theta_0-u$. In this case, score matching reduces to mean matching, where rectification reduces to ensuring that $P_{\hat{\eta}_m}Y$ is close to the empirical mean $P_mY$.
\end{example}
\begin{example}\textbf{(Linear regression).}
Let $X\in\mathbb{R}^p$ and $\ell_\theta(x,y)=(y-x^\top \theta)^2/2$. Then $g_\theta(x,y)=x(x^\top\theta-y)$, and at $\theta_0$, we have $Pg_{\theta_0}=P(xx^\top)\theta_0-P(xy)$. We take $h(x,y)=(\mathrm{vec}(xx^\top), xy)$ and $\Gamma(u,v)=\mathrm{Mat}(u)\theta_0-v$, where for a $p\times p$ matrix $A$, $\mathrm{vec}(A)\in\mathbb{R}^{p^2}$ denotes the vectorization of $A$ and $\mathrm{Mat}(\mathrm{vec}(A))=A$. The rectifier needs to match these two moments. If the base measure preserves the marginal law of $X$, the fitting procedure further reduces to only matching $P_m(XY)$.
\end{example}

\textbf{Practical guidelines.} For univariate outcomes, we recommend practitioners first reach for quantile mapping due to its simplicity and applicability in correcting a wide range of common miscalibrations.  Another concern is constructing the labeled calibration sample used to fit the rectifier. Options include using the full labeled sample (adjacent to empirical Bayes), sample splitting, or drawing a nonparametric bootstrap sample from the labeled sample. We recommend the last approach as it propagates uncertainty in estimating the rectifier into the posterior bootstrap distribution, while avoiding the efficiency loss of sample splitting.

\section{Related Work}\label{sec:related-work}
A growing literature has explored using LLMs and other foundation models for prior elicitation.  \citet{riegler2025using} prompt LLMs to propose parametric priors on regression coefficients. Similarly, \citet{arai2025patients} consider a hierarchical model for adverse events in multi-center clinical trials, and prompt LLMs to elicit rate parameters of hyperpriors on the parameters of gamma distributed site-specific adverse effect rates. Unlike these approaches that elicit parametric priors directly by prompting LLMs, \citet{misra2025foundation} treats LLM outputs as draws from a subjective prior predictive, to be incorporated into the user's own prior via a generalized Bayesian update. \citet{o2025ai} describe priors based on synthetic data generated by an AI-induced law. Our contribution is centered around correcting synthetic data-generation approaches to AI-based prior construction.

Using external data to construct or calibrate a prior distribution is well-established in Bayesian statistics, reflecting the view that  inference should combine data with relevant prior knowledge. \citet{chen2000power} incorporate historical data into the prior through a possibly discounted likelihood factor. The expected-posterior prior~\citep{perez2002expected} averages posteriors obtained from imaginary training samples drawn from a reference distribution. More recently, \citet{huang2025catalytic} construct a prior for a complex target model using synthetic data generated from a simpler donor model. Our approach is similar in spirit but allows for downstream inferential bias to be controlled.

\emph{Inference with predicted data} (IPD) is an area of growing interest~\citep{salerno2025really} in which covariates are fully observed while the outcome variable is only partially observed, and the practitioner uses predicted outcomes provided by a machine learning model. The naive incorporation of these predicted outcomes as additional data can cause error to propagate and invalidate the downstream inferential task \citep{ogburn2021warning}. Post-prediction inference~\citep{wang2020methods} involves modeling the relationship between real data and predicted outcomes and using the modeled relationship to correct predicted outcomes. Prediction-powered inference~\citep{angelopoulos2023prediction} (PPI) provides a principled means of valid statistical inference for certain types of parameters in this setting, including risk minimizers corresponding to convex loss functions. \citet{angelopoulos2024ppiplusplus} extend PPI and provide computational and statistical efficiency improvements. \citet{zrnic2024cross} consider when labels are imputed from a predictor trained on the labeled data itself. Both post-prediction inference and prediction-powered inference apply a correction at the level of the downstream estimator or inferential procedure, while we learn a rectified AI-induced probability law which has relevant moments aligned with the true data-generating law. The IPD setting is addressed by a special case of DP-based synthetic data priors when the base measure is an empirical measure on pairs of unlabeled covariates and deterministically imputed outcomes.

The use of predicted labels is prominent in computational social science as expensive human annotations can be easily imputed with LLMs. \citet{egami2023using} propose a framework for ensuring the validity of downstream statistical analyses when augmenting a set of human-labeled data with additional, potentially invalid LLM-labeled data. \citet{gligoric2025unconfident} further incorporate LLM-provided confidence indicators to reduce the number of human annotations that must be collected.

\section{Experiments}\label{sec:experiments}
In this section, we construct and apply the rectified AI prior in real-world applications. We evaluate the performance of the rectified AI prior in two settings. First, we study inferential targets like quantiles or regression coefficients, where we focus on the calibration and width of posterior credible intervals. Next, we study a downstream classification task where the goal is predictive performance of a Bayesian deep learner equipped with an LLM-informed prior. These settings highlight the broad benefits of rectification: both improved frequentist validity of inference, as well as more reliable incorporation of AI-generated prior information in downstream learning tasks.

\subsection{Statistical Inference on Population Risk Minimizers}\label{sec:exp-inference}

\textbf{Baselines.} We compare with the following approaches: the classical method based on labeled data only, prediction-powered inference (PPI++), and the AI prior centered at the raw AI base measure.

\textbf{Experimental setup.} For each experiment, we consider the labeled sample size to take values $n\in\{250,500,1000\}$. For each AI prior variation, we choose prior strength $\gamma=1$ and draw $B=500$ posterior bootstrap samples. Credible intervals are computed using empirical quantiles of these samples. For each method and $n$ value, we perform 100 replications from which we estimate performance metrics. Each replication has a different random subset of $n$ indices to be treated as the labeled sample, and reflects stochasticity in the labeled sample as well as any randomness in each interval construction procedure. We use a quantile map rectifier that is re-fit on an independent nonparametric bootstrap sample of the labeled data for each posterior bootstrap iteration. 

\textbf{Metrics.} We evaluate each method by the empirical coverages of $90\%$ intervals, as well as the interval score~\citep{gneiting2007strictly}. For a $1-\beta$ interval $(L,U)$ and target value $\theta_0$, the interval score is given by $(U-L)+2\beta^{-1}(L-\theta_0)\mathbb{I}\{\theta_0<L\}+2\beta^{-1}(\theta_0-U)\mathbb{I}\{\theta_0>U\}$. This is a proper scoring rule which rewards narrow intervals, penalizes failure to cover the truth, and is minimized in expectation by the central $1-\beta$ interval of the true distribution.

\begin{figure}[t!]
    \centering
    \includegraphics[width=\linewidth]{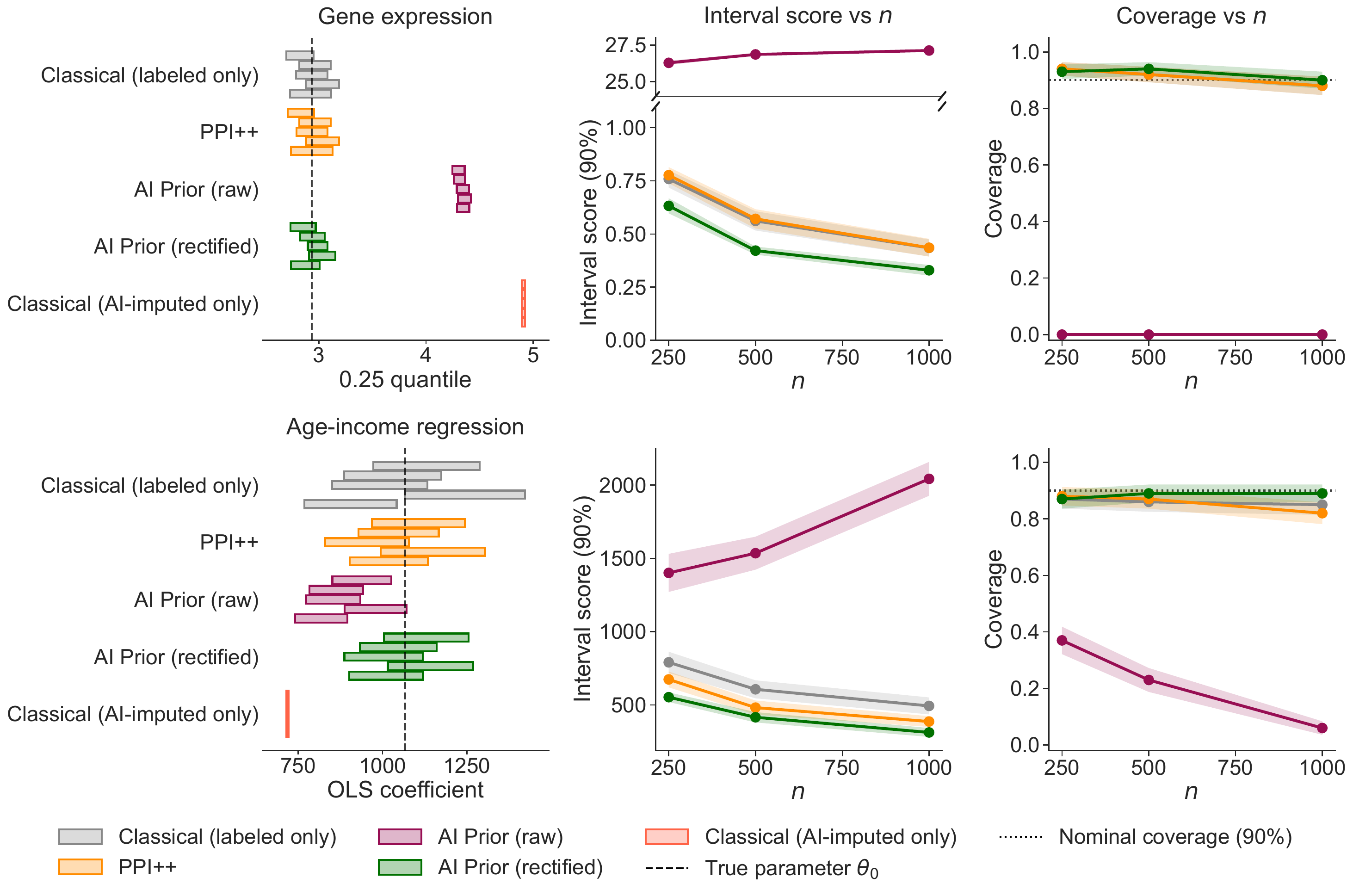}
    \caption{Interval estimates for population risk minimizers in gene expression and age-income regression experiments. We visualize 5 realizations of intervals, and compare the average interval score and coverage across 100 replications. Rectifying the prior yields intervals that achieve nominal coverage and have low interval scores. Error bars indicate one standard error.}
    \label{fig:results-gene-census}
\end{figure}

\textbf{Gene expression levels.} We apply our method to do statistical inference on quantiles of the expression level of a gene induced by native yeast promoter sequences. We follow the setup of previous work~\citep{angelopoulos2023prediction, o2025ai} which found that leveraging a transformer model to impute unknown expression levels can aid in statistical inference on the quantiles. \citet{vaishnav2022evolution} provide both the dataset and transformer model for label imputation. The data consist of $61,150$ native yeast promoter sequences, while the response $Y$ is the measured gene expression level. The parameter of interest is the fixed quantile $\tau=0.25$ of $Y$, which corresponds to the population minimizer of the $\tau$-check loss. We impute the real-valued gene expression levels for unlabeled data using the transformer model. The raw AI prior uses the empirical base measure induced by the unlabeled sample and the transformer. 

The first row of~\cref{fig:results-gene-census} displays some of the resulting interval estimates and the average interval scores and coverages (and one standard error) over 100 replications. When using $\gamma=1$, the AI base measure upwardly biases the resulting credible intervals for the 0.25 quantile of gene expression. The dramatic amount of bias injected from putting too much trust in the prior causes the credible intervals to have near zero coverage, and the resulting posterior is nearly useless for estimation and inference. Rectification is able to remedy this, causing the resulting interval score and coverage to be competitive with the PPI++ confidence interval.

\textbf{Age-income regression.} The goal is to investigate the relationship between age and income in the US Census data. We follow the census-income experiment of \citet{angelopoulos2023prediction} and use the same prediction setup. Specifically, the data are extracted from the California 2019 census via the Folktables interface and consist of $378,817$ individuals. The response $Y$ is yearly income (in dollars), and the covariate $X$ contains age and sex. The parameter of interest is the ordinary least squares coefficient of age in the population. Following \citet{angelopoulos2023prediction}, we use income predictions from a gradient-boosted tree trained on the previous year's raw census data. The raw AI prior uses the unlabeled sample and corresponding AI-predicted incomes as the empirical base measure. 

The second row in \cref{fig:results-gene-census} summarizes the resulting credible intervals and average interval score and coverage (and one standard error) over 100 replications. The raw AI prior produces intervals that are visibly shifted away from the target value, and both its interval score and empirical coverage become worse as $n$ increases. In contrast, the rectified AI prior corrects much of this shift---its intervals are centered closer to the target value, its interval score is comparable to those of the classical and PPI++ baselines, and its empirical coverage remains close to the nominal $90\%$ level. 

\textbf{Ablations and additional experiments.}  In~\cref{sec:supp-inf-experiments}, we include ablations with alternate specifications. \cref{fig:supp-gamma-ablation} considers different values of the prior strength $\gamma$. \cref{fig:supp-gene-quantile-ablation} includes results for alternative quantiles in the gene expression experiment. We also provide a comparison of different rectifiers and strategies for selecting the calibration sample used to fit the rectifier in~\cref{tab:supp-rectifier-ablation}.

\subsection{Deep Learning Classification of Erythemato-Squamous Disease}\label{sec:exp-skin}

\begin{figure}[t!]
    \centering
    \includegraphics[width=0.9\linewidth]{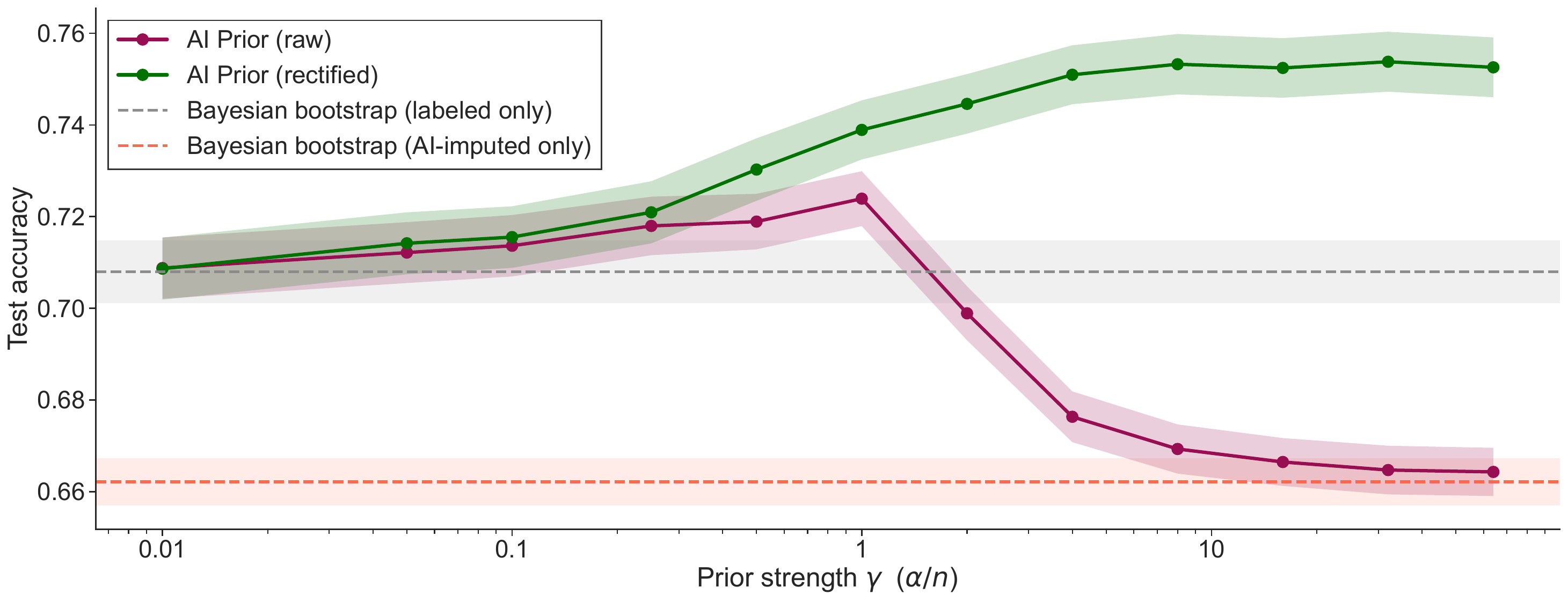}
    \caption{Average test accuracy as a function of prior strength over 100 repetitions when using raw and rectified AI priors on skin-disease classification. Rectifying the base measure allows for a larger gain in classification performance, which is available across a wide range of values for the prior strength. Error bars indicate one standard error.}
    \label{fig:results-skin}
\end{figure}

We also study a skin disease classification problem adapted from the UCI dermatology dataset~\citep{dermatology_33}. The dataset consists of 366 patients, and contains 12 clinically observable features alongside 22 histopathological features. We follow the experimental setup of~\citet{o2025ai} where we seek to classify the type of erythemato-squamous disease among six possible categories using only the 12 clinical features, which are observable without requiring biopsy. We consider 73 observations (20\% of the data) to be held-out test data, and $n=58$ observations to be labeled (20\% of the remaining data). The 235 leftover patients are considered to be unlabeled. For each unlabeled patient $X$, we impute class probabilities $\hat{p}$ by prompting an LLM. We used the GPT 5.5 model, and details regarding the prompt, output schema, and full structure of the AI base measure are available in~\cref{sec:skin-disease-base-measure-supp}. The parameter of interest is the weights of a multilayer perceptron under the cross-entropy loss. 

The AI prior uses the empirical base measure induced by covariates $X$ in the unlabeled set and imputed labels sampled from a categorical distribution of the LLM-imputed class probabilities $\hat{p}$. We rectify the AI prior by replacing $\hat{p}$ with $g(\hat{p}, X)$, where $g$ is a multinomial logistic regression on the log imputed probabilities and covariates. Explicitly, $g(\hat{p},X) = \mathrm{softmax}(W (\log \hat{p},X)+b)$, where $W\in\mathbb{R}^{6\times 18}$ and $b\in\mathbb{R}^6$, and $(W,b)$ are fit via cross-entropy loss on a nonparametric bootstrap sample. This is related to post-hoc probability calibration~\citep{guo2017calibration, kull2019beyond}.

We conduct our experiment over a grid of values for the prior strength parameter $\gamma$. For each value, we draw $B=100$ samples using the posterior bootstrap~\citep{fong2019scalable}. For each posterior bootstrap sample, we first fit the rectifier separately by drawing a calibration sample from the empirical distribution of the labeled data, which we use to fit the rectifier. The posterior bootstrap sample is then drawn by approximately solving the optimization problem against the Dirichlet weighted cross-entropy loss. The accuracy on the test set is computed by averaging the predicted class probabilities across the posterior samples and classifying via the argmax.

Both methods are compared against two baselines, which represent limiting cases of the standard AI prior. As the prior strength $\gamma$ tends towards zero, the method acts like the standard Bayesian bootstrap~\citep{rubin1981bayesian, lyddon2019general} fit on labeled data only. This is effectively the baseline performance of the multilayer perceptron. As $\gamma$ grows large, the influence of the imputed data dominates that of the labeled data, and the method acts like the Bayesian bootstrap multilayer perceptron performance when fit on imputed data only.  

\cref{fig:results-skin} displays the average test accuracy (and one standard error) over 100 repetitions with independent partitions of the indices of the dataset into the test, labeled, and unlabeled sets. We find a relatively small window of prior strength values where the raw AI prior yields a small increase in classification performance. However, calibrating the prior strength $\gamma$ correctly within this region would require the practitioner to have additional held-out validation data. Rectifying the AI prior substantially reduces the sensitivity to $\gamma$ in this experiment, providing a performance increase over the labeled-only Bayesian bootstrap across the entire tested region for $\gamma$, from 0.01 to 64. In addition, the increase in test accuracy grew substantially larger than the peak of the raw AI prior, reflecting an additive increase in classification accuracy of 0.049 compared to just 0.015.

This experiment highlights that the benefits of rectification extend beyond the setting of correcting centering bias in univariate estimation tasks. Through rectification, we are able to leverage an LLM in the data-scarce setting to create a substantially more performant classifier for erythemato-squamous disease, with large performance increases across a wide range of the prior strength $\gamma$.

\section{Conclusion}\label{sec:conclusion}
We proposed a rectified AI-powered prior that uses labeled calibration data to transform the raw AI base measure into a rectified base measure before centering a DP prior. This makes modern predictive systems usable for informative prior knowledge, while shifting the AI prior concentration parameter from a calibration device for frequentist reliability to a measure of prior strength. We developed theory showing that rectification can preserve the posterior bootstrap limit of AI-powered Bayesian inference and make the leading centering bias explicit as the task-relevant score discrepancy. Positive broader impacts of our work include improved statistical efficiency in data-limited applications. Biased or overconfident conclusions could arise if the rectifier is misspecified. This method should be used with transparent reporting, especially in high-stakes applications.

\textbf{Limitations and future work.} In general, the choice of the rectifier is problem-dependent and may be non-obvious for applied practitioners. The quantile mapping rectifier is only applicable for univariate outcome variables, and is only capable of correcting certain classes of miscalibration. A useful direction is rectifier model selection: using the labeled calibration data to choose among candidate rectifier families based on their performance on the target inferential task. In order to incorporate uncertainty from estimation of the rectifier into the resulting posterior, we consider re-fitting it each time from a re-randomized sample split or from a nonparametric bootstrap sample of the labeled data. Future work may investigate other avenues to propagate rectifier uncertainty into the posterior, such as a fully Bayes approach.

\renewcommand{\refname}{References}
\putbib
\end{bibunit}

\clearpage
\begin{bibunit}

\appendix

\setcounter{assumption}{0}
\renewcommand{\theassumption}{S\arabic{assumption}}

\section{Proofs}\label{supp:proofs}
In this section, we provide proofs for the results discussed in the main text.

The following conditions are sufficient for \cref{thm1}. All $o_p(\cdot)$ and $O_p(\cdot)$ statements involving $P_{\hat\eta_m}$ are with respect to the joint randomness of the calibration and inference samples, unless otherwise stated. Conditional statements are understood conditionally on the calibration sample used to construct $P_{\hat{\eta}_m}$. 

\begin{assumption}\label{supp:cond2} 
Assume the following
\begin{enumerate}
    \item[(a)] $\Theta \subset \mathbb R^d$ is compact and convex. The loss function $\ell: \Theta \times \mathcal X \times \mathcal Y \to \mathbb R$ is measurable and bounded from below. For every $\theta \in \Theta$, $P_0 \ell_\theta < \infty$ and $P_{\hat \eta_m} \ell_\theta < \infty$.

    \item[(b)] For each fixed $\gamma >0$ and every $\varepsilon >0$,
    \begin{equation*}
        \inf_{\theta \in \Theta: \|\theta - \theta_{0,m}^\gamma\| \ge \varepsilon} \Big(P_0 \ell_\theta + \gamma P_{\hat\eta_m}\ell_\theta - \big(P_0\ell_{\theta_{0,m}^\gamma} + \gamma P_{\hat\eta_m} \ell_{\theta_{0,m}^\gamma} \big) \Big) >0,
    \end{equation*}
    with probability tending to one. Moreover, $\sup_{\theta \in \Theta} \big\lvert P_n \ell_\theta - P_0 \ell_{\theta} \big\rvert = o_p(1)$.
    
    \item[(c)] For each fixed $\gamma >0$, conditional on the fitted base measure $P_{\hat \eta_m}$, there exists an open ball $B_m$ with $\theta_{0,m}^\gamma \in B_m \subset \mathrm{int}(\Theta)$, and for $P_0-$ and $P_{\hat \eta_m}-$ almost every $(X,Y)$, the first three partial derivatives of $\ell_\theta(X, Y)$ with respect to $\theta$ exist and are continuous for all $\theta \in B_m$. For $j,k,l \in \{1,\ldots, d\}$, there exist measurable functions $G_j$, $G_{jk}$, $G_{jkl}$, $H_{jkl}$, possibly depending on the calibration sample, such that for all $\theta \in B_m$,
    \begin{align*}
        \bigg\lvert \frac{\partial}{\partial \theta_j} \ell_\theta (X, Y) \bigg\rvert & \le G_j(X, Y), \\
        \bigg\lvert \frac{\partial^2}{\partial \theta_j \partial \theta_k} \ell_\theta (X, Y) \bigg\rvert & \le G_{jk}(X, Y), \\
        \bigg\lvert \frac{\partial^3}{\partial \theta_j \partial \theta_k \partial \theta_l} \ell_\theta (X, Y) \bigg\rvert & \le G_{jkl}(X, Y), \\
    \end{align*}
    and
    \begin{equation*}
        \bigg\lvert \frac{\partial}{\partial \theta_j} \ell_\theta(X,Y) \frac{\partial^2}{\partial \theta_k \partial \theta_l} \ell_\theta (X, Y) \bigg\rvert \le H_{jkl}(X, Y).
    \end{equation*}
    Moreover, these functions are integrable under both $P_0$ and $P_{\hat \eta_m}$: $P_0 G_j + P_{\hat \eta_m}G_j < \infty$, $P_0 G_{jk} + P_{\hat \eta_m}G_{jk} < \infty$, $P_0 G_{jkl} + P_{\hat \eta_m}G_{jkl} < \infty$, and $P_0 H_{jkl} + P_{\hat \eta_m} H_{jkl} < \infty$.

    \item[(d)] For $\theta \in B_m$, define $J_1(\theta) := P_0 \dot g_\theta$, $J_{2,m}(\theta) := P_{\hat\eta_m} \dot g_\theta$, and $I_1(\theta) := P_0(g_\theta g_\theta^\mathtt{T})$, $I_{2,m}(\theta) := P_{\hat\eta_m}(g_\theta g_\theta^\mathtt{T})$. Assume all of these matrices are finite for all $\theta \in B_m$. Define
    \begin{equation*}
        J_m(\theta) = \frac{1}{1 + \gamma} \big(J_1(\theta) + \gamma J_{2,m}(\theta) \big),
    \end{equation*}
    and
    \begin{equation*}
        I_m(\theta) = \frac{1}{1 + \gamma} \big(I_1(\theta) + \gamma I_{2,m}(\theta) \big).
    \end{equation*}
    Assume $J_m(\theta_{0,m}^\gamma)$ is nonsingular with $ \| J_m(\theta_{0,m}^\gamma)^{-1} \| = O_p(1)$, and $I_m(\theta_{0,m}^\gamma)$ is finite and positive definite.
\end{enumerate}
\end{assumption}

\cref{supp:cond2} (c)-(d) are the rectified versions of the regularity conditions used in \citet{o2025ai}. Conditional on $P_{\hat \eta_m}$, they play the role of the corresponding smoothness and information conditions with $F_{AI}$ replaced by $P_{\hat{\eta}_m}$. 

\begin{proposition}\label{prop1}
    Suppose \cref{supp:cond2}(a) holds. In addition, assume that $\theta_0$ is a well-separated minimizer of $P_0\ell_\theta$, i.e., for every $\varepsilon >0$, $\inf_{\theta \in \Theta: \|\theta-\theta_0\| \ge \varepsilon} \big(P_0\ell_\theta - P_0 \ell_{\theta_0} \big) >0$. Assume also that $\sup_{\theta \in \Theta} \big|P_n\ell_\theta - P_0 \ell_\theta \big| = o_p(1)$, and $\sup_{\theta \in \Theta} \big| P_{\hat\eta_m}\ell_\theta - P_0\ell_\theta \big| = o_p(1)$. 
    Then, as $n,m\to\infty$,
    \begin{equation*}
        \theta_{0,m}^{\gamma} \xrightarrow[]{\mathrm{p}} \theta_0, \qquad \hat{\theta}_{n,m}^\gamma \xrightarrow[]{\mathrm{p}} \theta_0.
    \end{equation*}
\end{proposition}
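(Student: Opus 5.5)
The plan is a standard argmin-consistency argument for M-estimators, applied to the two combined criteria
\[
M(\theta) := P_0\ell_\theta + \gamma P_0\ell_\theta = (1+\gamma)P_0\ell_\theta,
\qquad
M_m(\theta) := P_0\ell_\theta + \gamma P_{\hat\eta_m}\ell_\theta,
\qquad
M_{n,m}(\theta) := P_n\ell_\theta + \gamma P_{\hat\eta_m}\ell_\theta .
\]
By definition, $\theta_{0,m}^\gamma$ minimizes $M_m$ and $\hat\theta_{n,m}^\gamma$ minimizes $M_{n,m}$. Since $\gamma>0$ is fixed, $M$ has the same minimizer as $P_0\ell_\theta$, namely $\theta_0$. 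It is also well separated, with separation margin multiplied by $(1+\gamma)$.

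\textbf{Step 1: uniform convergence.} The triangle inequality gives
\[
\sup_\theta|M_m-M| = \gamma\sup_\theta|(P_{\hat\eta_m}-P_0)\ell_\theta| = o_p(1),
\]
and
\[
\sup_\theta|M_{n,m}-M| \le \sup_\theta|(P_n-P_0)\ell_\theta| + \gamma\sup_\theta|(P_{\hat\eta_m}-P_0)\ell_\theta| = o_p(1).
\]
Both use the assumptions jointly in $(n,m)\to\infty$. \cref{supp:cond2}(a) guarantees that all these quantities are finite, so the suprema make sense.

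\textbf{Step 2: the argmin argument.} Let $\hat\theta$ be either $\theta_{0,m}^\gamma$ or $\hat\theta_{n,m}^\gamma$, and let $\widetilde M$ be the corresponding criterion. Since $\hat\theta$ minimizes $\widetilde M$,
\[
M(\hat\theta)-M(\theta_0) \le [M(\hat\theta)-\widetilde M(\hat\theta)] + [\widetilde M(\theta_0)-M(\theta_0)] \le 2\sup_\theta|\widetilde M-M| = o_p(1).
\]
Fix $\varepsilon>0$ and let
\[
\kappa_\varepsilon := \inf_{\|\theta-\theta_0\|\ge\varepsilon}\{M(\theta)-M(\theta_0)\} > 0,
\]
which is positive by well-separatedness. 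On the event $\{\|\hat\theta-\theta_0\|\ge\varepsilon\}$ we have $M(\hat\theta)-M(\theta_0)\ge\kappa_\varepsilon$. Therefore
\[
P(\|\hat\theta-\theta_0\|\ge\varepsilon) \le P\big(2\sup_\theta|\widetilde M-M|\ge\kappa_\varepsilon\big) \to 0,
\]
which gives $\hat\theta\xrightarrow{\mathrm p}\theta_0$ for both estimators.

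\textbf{Main obstacle.} The only real subtlety is measurability and the exact-minimizer requirement. Minimizers exist if the criteria are lower semicontinuous on the compact $\Theta$, but that is not assumed. I would sidestep this by treating any $\hat\theta$ that satisfies
\[
\widetilde M(\hat\theta)\le\inf_\theta\widetilde M(\theta)+o_p(1)
\]
as admissible. The chain in Step 2 goes through unchanged with an extra $o_p(1)$ term. Probabilities of nonmeasurable events can be interpreted as outer probabilities, in the usual way for the argmax theorem. Note that the deterministic randomness in $\theta_{0,m}^\gamma$ comes only through the calibration sample, so the same argument applies to it.
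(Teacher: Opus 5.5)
Your proof is correct and takes essentially the paper's route: the same comparison criterion $(1+\gamma)P_0\ell_\theta$ with well-separated minimizer $\theta_0$, and the same triangle-inequality bounds giving uniform convergence of both mixed risks. The only difference is that you prove the argmin-consistency step directly, whereas the paper cites Theorem 5.7 of van der Vaart, applied to the negated risks, for that step.
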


\begin{proof}[Proof of \cref{prop1}]
    By \cref{supp:cond2}(a), all risks below are well-defined and finite for each $\theta \in \Theta$. Define $R^\gamma(\theta) := (1+\gamma)P_0 \ell_\theta$. By assumption, $\theta_0$ is a well-separated minimizer of $P_0\ell_\theta$, hence also of $R^\gamma$. That is, for every $\epsilon >0$, $\inf_{\theta: \|\theta - \theta_0\| \ge \epsilon} \big( R^\gamma(\theta) - R^\gamma(\theta_0) \big) > 0$. First consider the oracle mixed risk $R_{0,m}^\gamma(\theta) := P_0 \ell_\theta + \gamma P_{\hat\eta_m}\ell_\theta$. Then $R_{0,m}^\gamma(\theta) - R^\gamma(\theta) = \gamma \big( P_{\hat\eta_m}\ell_\theta - P_0\ell_\theta \big)$. Therefore,
    \begin{equation}
        \sup_{\theta \in \Theta} \big\lvert R_{0,m}^\gamma(\theta) - R^\gamma(\theta) \big\rvert \le \gamma \sup_{\theta \in \Theta} \big\lvert P_{\hat\eta_m}\ell_\theta - P_0\ell_\theta \big\rvert = o_p(1).
    \end{equation}
    
    We apply Theorem 5.7 of \citealp{Vaart_1998} to the negative risks. Set $M_m(\theta) = -R_{0,m}^\gamma (\theta)$ and $M(\theta) = -R^\gamma(\theta)$. Then, the uniform convergence condition of Theorem 5.7 is satisfied because $\sup_{\theta \in \Theta} \big\lvert M_m(\theta) - M(\theta) \big\rvert = o_p(1)$. Because $R^\gamma$ has a well-separated minimizer at $\theta_0$, $M$ has a well-separated maximizer at $\theta_0$. Also, since $\theta_{0,m}^\gamma$ minimizes $R_{0,m}^\gamma$, it maximizes $M_m$. Hence, $M_m(\theta_{0,m}^\gamma) \ge M_m(\theta_0)$. This is stronger than the condition in Theorem 5.7. Therefore, $\theta_{0,m}^\gamma \xrightarrow[]{\mathrm{p}} \theta_0$ as $n,m \to \infty$.

    Next consider the empirical mixed risk $R_{n,m}^\gamma(\theta) = P_n \ell_\theta + \gamma P_{\hat \eta_m} \ell_\theta$. We have 
    \begin{equation}
        R_{n,m}^\gamma(\theta) - R^\gamma(\theta) = \big(P_n\ell_\theta - P_0\ell_\theta\big) + \gamma \big(P_{\hat\eta_m} \ell_\theta - P_0\ell_\theta \big).
    \end{equation}
    Thus
    \begin{equation}
        \sup_{\theta \in \Theta} \big\lvert R_{n,m}^\gamma(\theta) - R^\gamma(\theta) \big\rvert \le \sup_{\theta \in \Theta} \big\lvert P_n\ell_\theta - P_0\ell_\theta \big\rvert + \gamma \sup_{\theta \in \Theta} \big\lvert P_{\hat\eta_m}\ell_\theta - P_0\ell_\theta \big\rvert = o_p(1).
    \end{equation}

    Apply Theorem 5.7 of \citet{Vaart_1998} once again with $M_{n,m}(\theta) = -R_{n,m}^\gamma(\theta)$, $M(\theta) = -R^\gamma(\theta)$. We have $\sup_{\theta \in \Theta} \big\lvert M_{n,m}(\theta) - M(\theta) \big\rvert = o_p(1)$. Also, $M$ has a well-separated maximizer at $\theta_0$, and since $\hat\theta_{n,m}^\gamma$ minimizes $R_{n,m}^\gamma$, it maximizes $M_{n,m}$. Hence $M_{n,m}(\hat\theta_{n,m}^\gamma) \ge M_{n,m}(\theta_0)$, so the condition in Theorem 5.7 holds. Therefore, $\hat \theta_{n,m}^\gamma \xrightarrow[]{\mathrm{p}} \theta_0$ as $n,m \to \infty$. This proves \cref{prop1}.
\end{proof}

\begin{proof}[Proof of \cref{thm1}]
    We condition on the calibration sample used to construct $P_{\hat\eta_m}$. Then $P_{\hat\eta_m}$ is fixed and independent of the inference sample. Therefore, Theorem 1 of \citet{o2025ai} can be applied with $F_{AI}$ replaced by $P_{\hat\eta_m}$. The relevant correspondences are $F_{AI} \leftrightarrow P_{\hat\eta_m}$, $\theta_0^\gamma \leftrightarrow \theta_{0,m}^\gamma$, $\hat\theta_n^\alpha \leftrightarrow \hat\theta_{n,m}^\gamma$, and $\alpha = \gamma n$. Hence the total posterior bootstrap mass is $n + \alpha = n (1 + \gamma)$, which gives the scaling $\sqrt{n(1 + \gamma)}$. 

    The empirical mixed objective in our setting is $\theta \mapsto P_n \ell_\theta + \gamma P_{\hat \eta_m} \ell_\theta$ and its minimizer is $\hat \theta_{n,m}^\gamma$. The corresponding mixed population objective is $\theta \mapsto P_0\ell_\theta + \gamma P_{\hat \eta_m} \ell_\theta$ and its minimizer is $\theta_{0,m}^\gamma$. Now define the normalized mixed population measure
    \begin{equation}
        Q_m^\gamma = \frac{1}{1 + \gamma} P_0 + \frac{\gamma}{1 + \gamma} P_{\hat\eta_m}.
    \end{equation}
    Then
    \begin{equation}
        P_0 \ell_\theta + \gamma P_{\hat \eta_m}\ell_\theta = (1 + \gamma) Q_m^\gamma \ell_\theta,
    \end{equation}
    so $\theta_{0,m}^\gamma$ is also the minimizer of $Q_m^\gamma \ell_\theta$. The associated Hessian and score second moment matrices are
    \begin{equation*}
        J_m = \frac{1}{1 + \gamma} \big(J_1(\theta_{0,m}^\gamma) + \gamma J_{2,m}(\theta_{0,m}^\gamma) \big),
    \end{equation*}
    and
    \begin{equation*}
        I_m = \frac{1}{1 + \gamma} \big(I_1(\theta_{0,m}^\gamma) + \gamma I_{2,m}(\theta_{0,m}^\gamma) \big).
    \end{equation*}
    
    By \cref{supp:cond2}, conditional on the calibration sample used to construct $P_{\hat\eta_m}$, the regularity conditions required for Theorem 1 of \citet{o2025ai} hold with $P_{\hat\eta_m}$ replacing $F_{AI}$. Applying their Theorem 1 conditionally on the calibration sample, we obtain
    \begin{equation}
        \Pi_{\mathrm{PB}}^{\hat{\eta}_m} \Big[ \sqrt{n(1+\gamma)} \big( \theta - \hat{\theta}_{n,m}^\gamma \big) \in A \Big] \xrightarrow[]{\mathrm{p}} P(Z_m \in A \mid P_{\hat \eta_m})
    \end{equation}
    for every Borel set $A \subset \mathbb R^d$ satisfying $P(Z_m \in \partial A \mid P_{\hat \eta_m}) =0$ where $Z_m \mid P_{\hat \eta_m} \sim \mathcal N(0, J_m^{-1}I_m J_m^{-1})$. The matrices $J_m$ and $I_m$ coincide with the $J$ and $I$ matrices in that theorem. Their proof given in Appendix A.2 follows the loss-likelihood bootstrap argument of \citep{lyddon2019general, Newton1991}. This proves the claim. 
\end{proof}

\begin{proof}[Proof of \cref{thm2}]
    Let $d_m := \theta_{0,m}^\gamma - \theta_0$. By the assumed consistency, $d_m = o_p(1)$. Since $\theta_0 \in \mathrm{int}(\Theta)$ and $B(\theta_0, \delta) \subset \mathrm{int}(\Theta)$, we have $\theta_{0,m}^\gamma \in B(\theta_0, \delta)$ with probability tending to one. Therefore, we have 
    \begin{equation}\label{eq:first-order}
        P_0 g_{\theta_{0,m}^\gamma} + \gamma P_{\hat\eta_m} g_{\theta_{0,m}^\gamma} = 0.
    \end{equation}
    By \cref{cond1}(b), $g_\theta$ is differentiable near $\theta_0$ and the map $\theta \mapsto P_0 \dot g_{\theta}$ is continuous at $\theta_0$. Then by Taylor expansion around $\theta_0$, and by \cref{cond1}(b),
    \begin{equation}\label{eq:taylor1}
        P_0 g_{\theta_{0,m}^\gamma} = P_0 g_{\theta_0} + \big( J_0 + o_p(1) \big) d_m.
    \end{equation}
    Similarly,
    \begin{equation}\label{eq:taylor2}
        P_{\hat\eta_m} g_{\theta_{0,m}^\gamma} = P_{\hat\eta_m} g_{\theta_{0}} + \big(J_0 + o_p(1) \big) d_m. 
    \end{equation}
    For this second expansion we use \cref{cond1}(b) again. Indeed, locally around $\theta_0$,
    \begin{equation}
        P_{\hat\eta_m} \dot g_{\theta} = P_0 \dot g_\theta + (P_{\hat\eta_m} - P_0 ) \dot g_\theta.
    \end{equation}
    Hence $P_{\hat\eta_m} \dot g_\theta = J_0 + o_p(1)$ along the segment between $\theta_0$ and $\theta_{0,m}^\gamma$.

    Substituting \cref{eq:taylor1} and \cref{eq:taylor2} into \cref{eq:first-order} gives
    \begin{equation}
        0 = P_0 g_{\theta_0} + \gamma P_{\hat\eta_m}g_{\theta_0} + \big( (1+\gamma)J_0 + o_p(1) \big) d_m.
    \end{equation}
    Since $P_0 g_{\theta_0} =0$ by \cref{cond1}(a), $P_{\hat\eta_m} g_{\theta_0} = (P_{\hat\eta_m} - P_0) g_{\theta_0}$. Thus, $\big( (1+\gamma)J_0 + o_p(1) \big) d_m = \gamma (P_0 - P_{\hat\eta_m} ) g_{\theta_0}$. By \cref{cond1}(b), $J_0$ is nonsingular. Hence
    \begin{equation}
        d_m = \frac{\gamma}{1 + \gamma} J_0^{-1} (P_0 - P_{\hat\eta_m}) g_{\theta_0} + R_m,
    \end{equation}
    where $\|R_m\| \le o_p(1)  \, \| (P_0 - P_{\hat\eta_m}) g_{\theta_0} \|$. But, by \cref{cond1}(c), we have $R_m = o_p(r_m)$. Therefore,
    \begin{equation}
        \theta_{0,m}^\gamma - \theta_0 = \frac{\gamma}{1 + \gamma} J_0^{-1} (P_0 - P_{\hat \eta_m}) g_{\theta_0} + o_p (r_m).
    \end{equation}
    If $(P_0 - P_{\hat\eta_m}) g_{\theta_0} = O_p(m^{-1/2})$, then take $r_m = m^{-1/2}$, and we get $\theta_{0,m}^\gamma - \theta_0 = O_p(m^{-1/2})$.
\end{proof}

\begin{proof}[Proof of \cref{prop:moment-matching}]
If $\Gamma$ is only locally Lipschitz, then since $P_{\hat{\eta}_m}h-P_0h = o_p(1)$, we have that $P_{\hat{\eta}_m}h$ lies in the neighborhood of $P_0h$ in which $\Gamma$ is Lipschitz with probability tending to 1. Therefore, we have
\begin{align*}
    \|(P_{\hat{\eta}_m}-P_0)g_{\theta_0}\| &= \|\Gamma(P_{\hat{\eta}_m}h)-\Gamma(P_0h)\| \\
    &\le L \|P_{\hat{\eta}_m}h-P_0h\|
\end{align*}
where the inequality holds with probability tending to 1. If $\Gamma$ were instead Lipschitz globally, the above display holds with probability 1.
The triangle inequality then yields
\begin{align*}
    \|P_{\hat{\eta}_m}h-P_0h\|&\le \|P_{\hat{\eta}_m}h-P_mh\| + \|P_mh-P_0h\| \,.
\end{align*}
The first term is $O_p(m^{-1/2})$ by assumption, and the second term is $O_p(m^{-1/2})$ by the multivariate central limit theorem, using the assumption that $P_0\|h\|^2<\infty$. Therefore, we may conclude that $(P_{\hat{\eta}_m}-P_0)g_{\theta_0}=O_p(m^{-1/2})$.
\end{proof}

\section{Experiments (Supplement)}\label{sec:supp-experiments}

\subsection{Additional Experimental Results}\label{sec:supp-inf-experiments}

\begin{figure}[t!]
    \centering
    \includegraphics[width=\linewidth]{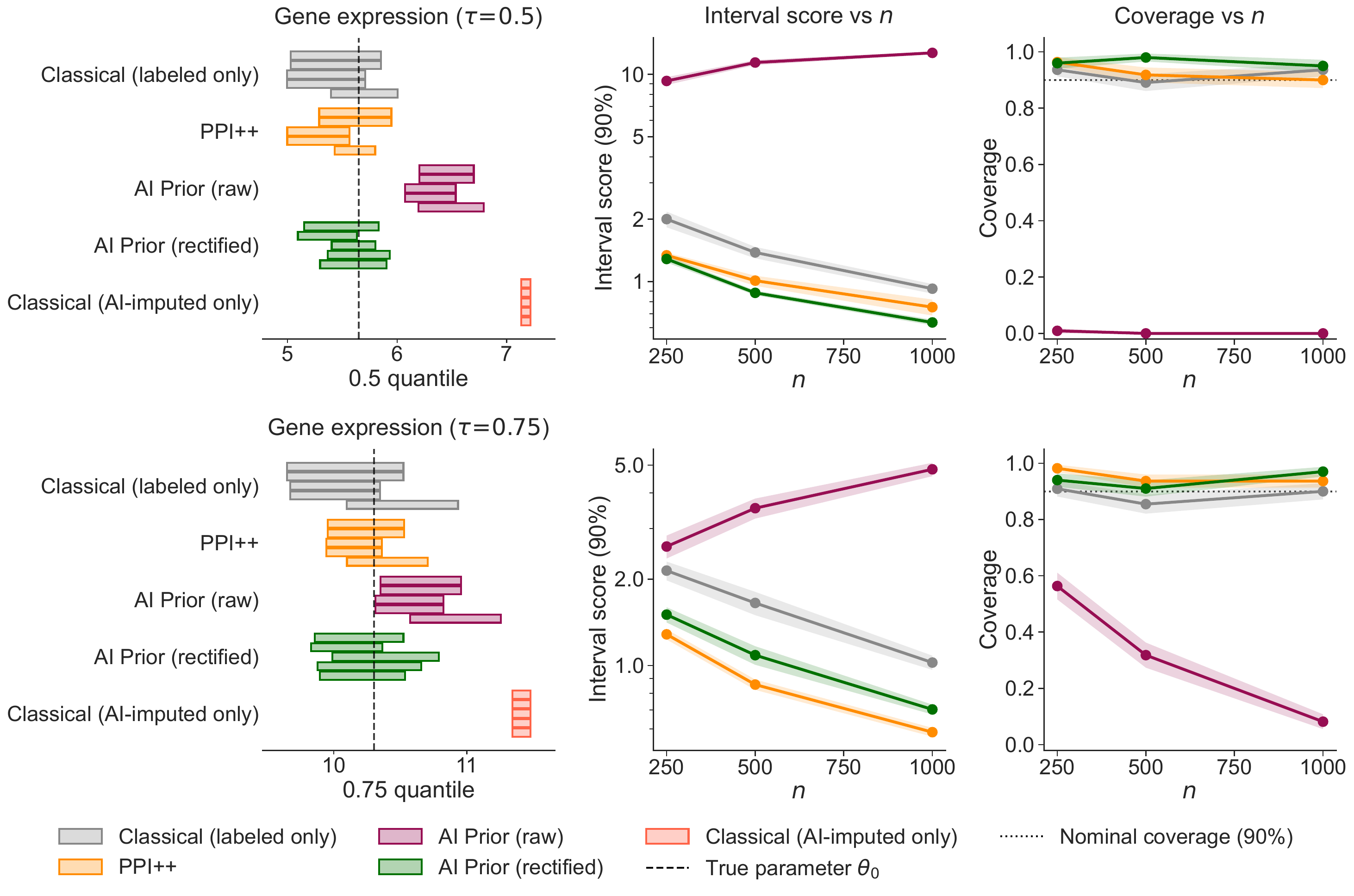}
    \caption{Results for gene expression on quantiles $\tau\in\{0.5,0.75\}$. The prior strength is fixed at $\gamma=1$. Error bars indicate one standard error.}
    \label{fig:supp-gene-quantile-ablation}
\end{figure}

\begin{figure}[t!]
    \centering
    \includegraphics[width=\linewidth]{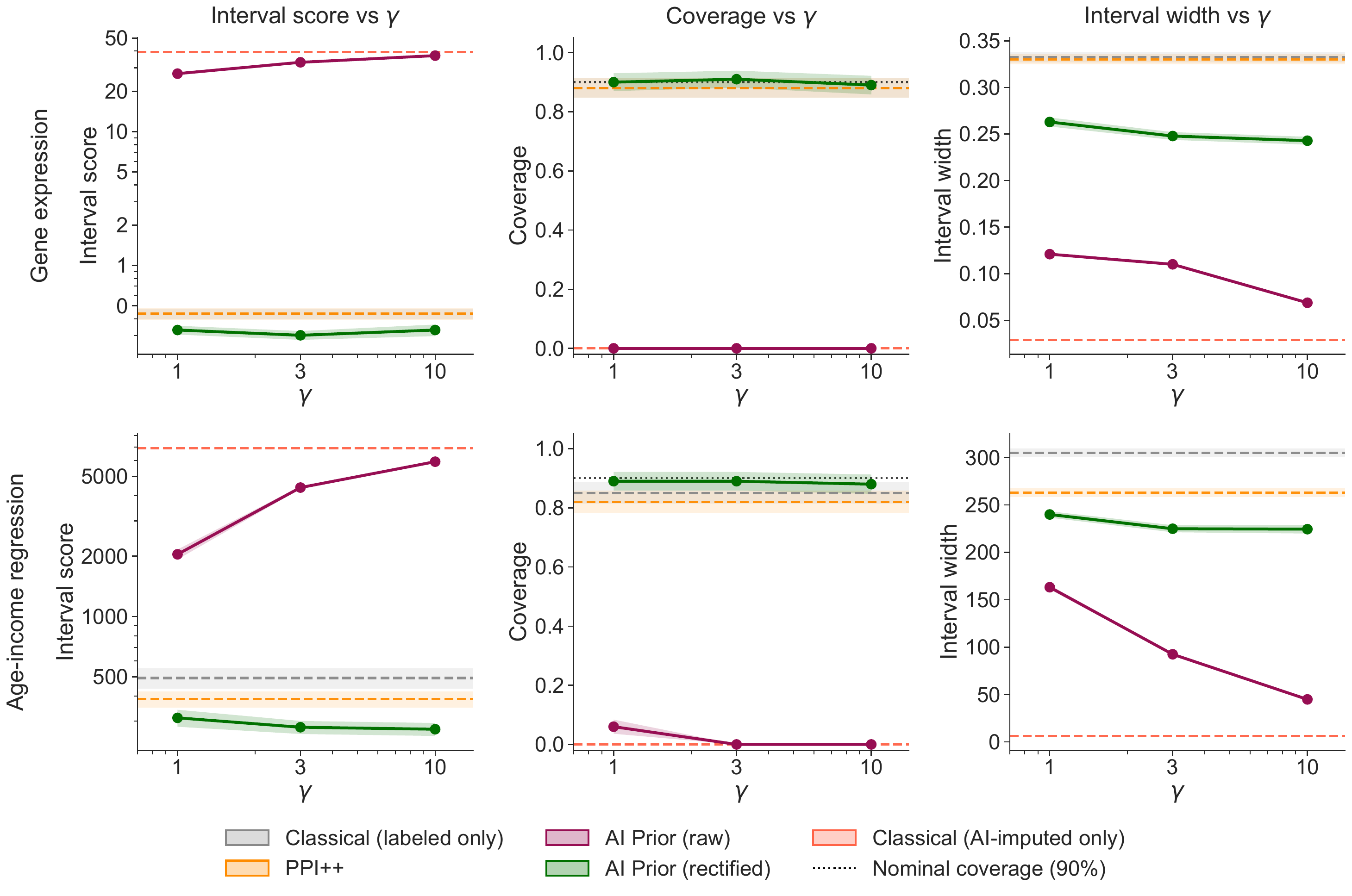}
    \caption{Results for gene expression and age-income regression experiments with varying prior strength $\gamma\in\{1,3,10\}$. The labeled sample size is fixed at $n=1000$. We show the average interval score, coverage, and interval width over 100 repetitions. Error bars indicate one standard error.}
    \label{fig:supp-gamma-ablation}
\end{figure}

\paragraph{Different prior strength values.} In~\cref{fig:supp-gamma-ablation}, we show the results of the experiments in~\cref{sec:exp-inference} with different choices of prior strength parameter $\gamma$. We observe that even as the prior strength increases massively (tenfold), we can retain coverage that is at the nominal level, or not too far below, while the raw AI prior absorbs too much bias from the base measure and causes the posterior to rapidly deteriorate in quality. 

\paragraph{Different quantiles for gene expression experiment.} \cref{fig:supp-gene-quantile-ablation} displays analogous results to the gene experiment in the main text, but for alternative quantile values $\tau\in\{0.5,0.75\}$. We find that rectification yields substantial improvements over the raw AI prior at this prior strength across all of the quantile values $\tau\in\{0.25,0.5,0.75\}$ investigated in~\citet{angelopoulos2023prediction}. The benefit is more exaggerated for the lower quantile values, where the AI base measure seems to incur more bias.

\begin{table}[h!]
    \centering
\resizebox{\textwidth}{!}{
\begin{tabular}{lll rrrr}
\toprule
Experiment & Rectifier & Strategy & Bias (SE) & IS (SE) & Width (SE) & Cov.\ (SE) \\
\midrule
  \multirow{10}{*}{Gene expression ($\tau\!=\!0.25$)} & None (raw) & n/a & 1.4 (0.0031) & 27 (0.049) & 0.12 (0.0015) & 0 (0) \\
  \cmidrule(lr){2-7}
   & \multirow{3}{*}{Moment matching} & Fixed & 0.0089 (0.011) & 0.60 (0.079) & 0.19 (0.0018) & 0.62 (0.049) \\
   &  & Split & 0.016 (0.011) & 0.43 (0.051) & 0.28 (0.0030) & 0.82 (0.038) \\
   &  & NPB & 0.012 (0.011) & 0.44 (0.054) & 0.27 (0.0042) & 0.85 (0.036) \\
  \cmidrule(lr){2-7}
   & \multirow{3}{*}{Isotonic} & Fixed & -0.069 (0.011) & 0.77 (0.088) & 0.22 (0.0093) & 0.59 (0.049) \\
   &  & Split & -0.065 (0.0097) & 0.48 (0.022) & 0.44 (0.0070) & 0.94 (0.024) \\
   &  & NPB & -0.068 (0.0100) & 0.51 (0.053) & 0.35 (0.0049) & 0.84 (0.037) \\
  \cmidrule(lr){2-7}
   & \multirow{3}{*}{Quantile map} & Fixed & 0.0076 (0.0078) & 0.32 (0.030) & 0.23 (0.0044) & 0.86 (0.035) \\
   &  & Split & 0.011 (0.0077) & 0.38 (0.0071) & 0.37 (0.0058) & 0.98 (0.014) \\
   &  & NPB & 0.0082 (0.0078) & 0.33 (0.025) & 0.26 (0.0048) & 0.90 (0.030) \\
\midrule
  \multirow{10}{*}{Age-income regression} & None (raw) & n/a & -180 (5.4) & 2000 (120) & 160 (2.2) & 0.060 (0.024) \\
  \cmidrule(lr){2-7}
   & \multirow{3}{*}{Moment matching} & Fixed & -6.4 (9.5) & 560 (76) & 170 (2.2) & 0.62 (0.049) \\
   &  & Split & -6.6 (9.5) & 420 (54) & 250 (3.1) & 0.80 (0.040) \\
   &  & NPB & -6.6 (9.5) & 480 (62) & 210 (3.3) & 0.74 (0.044) \\
  \cmidrule(lr){2-7}
   & \multirow{3}{*}{Isotonic} & Fixed & -46 (7.2) & 460 (59) & 170 (2.2) & 0.65 (0.048) \\
   &  & Split & -45 (7.2) & 360 (31) & 280 (3.7) & 0.90 (0.030) \\
   &  & NPB & -45 (7.2) & 410 (52) & 200 (2.5) & 0.74 (0.044) \\
  \cmidrule(lr){2-7}
   & \multirow{3}{*}{Quantile map} & Fixed & 2.8 (7.5) & 320 (36) & 210 (2.7) & 0.86 (0.035) \\
   &  & Split & 0.058 (7.4) & 340 (10) & 330 (3.9) & 0.98 (0.014) \\
   &  & NPB & 6.5 (7.4) & 310 (30) & 240 (3.4) & 0.89 (0.031) \\
\bottomrule
\end{tabular}
}

    \caption{Evaluation of rectifiers and calibration sample construction strategies on the gene expression and age-income regression examples over 100 repetitions. All combinations of rectifiers and strategies are generally able to meaningfully reduce the centering bias to a negligible fraction of that of the raw AI prior.}
    \label{tab:supp-rectifier-ablation}
\end{table}

\paragraph{Different rectification strategies.} \cref{tab:supp-rectifier-ablation} shows the results of the experiments in~\cref{sec:exp-inference} under different rectifiers and strategies for constructing the calibration sample used to fit the rectifier. Biases in the table are with respect to the posterior mean. The interval score (IS), width, and coverage are with respect to 90\% credible intervals. SE values indicate one standard error. The isotonic rectifier fits a monotone nondecreasing map on the calibration sample using the pool adjacent violators algorithm. Quantile map is defined in~\cref{sec:rectifier-taxonomy}. For moment matching, we fit a parametric rectifier (a scalar additive shift for gene expression, and an affine transformation for the OLS coefficient) by choosing its parameters to match the relevant empirical statistic on the calibration sample. Fixed, split, and NPB strategies refer to constructing the calibration sample by using the entire labeled sample, 50/50 sample splitting, and a nonparametric bootstrap sample from the labeled sample. All of the tested rectifier and strategy combinations substantially reduce the centering bias. In these applications, the quantile map rectifier tends to perform the best in terms of the interval score. Different calibration sample construction strategies do not meaningfully change the bias, but can affect the interval score under certain rectifiers. In particular, the fixed strategy inflates the interval score under the moment-matching and isotonic rectifiers. The choice of strategy does not make a large difference under the quantile map rectifier.

\subsection{Base measure for skin-disease classification}\label{sec:skin-disease-base-measure-supp}

\paragraph{Neural network structure and optimization.} The neural network structure consists of a single hidden layer of size 20, with ReLU activation and softmax output. The parameter of interest is the minimizer of the multivariate cross-entropy loss with respect to this architecture. For optimization, we use the Adam optimizer with 200 epochs on the full dataset (no mini-batching).

\paragraph{Covariate data and preprocessing.} The set of covariates is \texttt{age}, which is an integer number of years, \texttt{family\_history}, which is binary, and ten clinically observable features that describe the skin condition. These are: \texttt{erythema}, \texttt{scaling}, \texttt{definite\_borders}, \texttt{itching}, \texttt{koebner\_phenomenon}, \texttt{polygonal\_papules}, \texttt{follicular\_papules}, \texttt{oral\_mucosal\_involvement}, \texttt{knee\_and\_elbow\_involvement}, \texttt{scalp\_involvement}. Each of these are scored on a discrete ordinal scale from 0-3, where 0 means absent and 3 means severe or extensive. 

\paragraph{Base measure.} The base measure on covariates $P_{\mathrm{AI}}^X$ is the empirical distribution on the 235 unlabeled patients. 

The base measure on labels given covariates, $P_{\mathrm{AI}}^Y(\cdot\mid x)$ is given by prompting a large language model with patient information in the context. In particular, we query the \texttt{gpt-5.5} model via the OpenAI chat completions API, with \texttt{xhigh} reasoning effort. We use the following system prompt

\begin{verbatim}
  "You are a dermatology classification assistant. Given clinical
  features for one patient, estimate the probability of each of six
  erythemato-squamous diseases: psoriasis, seboreic dermatitis, lichen
  planus, pityriasis rosea, cronic dermatitis, pityriasis rubra pilaris.
  Most features are scored 0-3 (0 = absent, 3 = severe / extensive).
  family_history is binary (0 or 1). age is in years. Output strictly as
  a JSON object with these six keys; values are probabilities in [0, 1]
  and should sum to 1.0."
\end{verbatim}
and also supply a user message containing the covariate data as a comma separated string. For example:
\begin{verbatim}
    Patient features:\n erythema: 2, scaling: 2, […], age: 55
\end{verbatim}
We enforce a structured output with the following JSON schema:
\begin{verbatim}
  {
    "type": "object",
    "additionalProperties": false,
    "required": ["psoriasis", "seboreic dermatitis", "lichen planus",
                 "pityriasis rosea", "cronic dermatitis",
                 "pityriasis rubra pilaris"],
    "properties": {
      "<each label>": {"type": "number", "minimum": 0.0, "maximum": 1.0}
    }
  }
\end{verbatim}
This ensures that all six keys are present and in the range $[0,1]$. It does not enforce that probability vectors lie in the simplex, so we divide each given probability by their sum to ensure that they sum to one. We note that misspellings in the names of the diseases are inherited from the UCI dataset's class labels.

\subsection{Experimental compute resources}\label{sec:exp-comp-resources}

All of the experiments in the paper were conducted on an Apple MacBook Pro with M1 Pro processor and 16 GB memory. Six CPU workers were used, and the amount of time necessary to reproduce all of the experimental results is about four hours. This decomposes into: about two hours for all of the experimental results pertaining to the gene expression experiment (in the article and supplementary material), about one hour for all of the experimental results pertaining to the age-income regression experiment (in both the article and supplementary material), and about one hour for all results pertaining to the skin disease experiment.

\subsection{Licenses for existing assets}\label{sec:licenses}

Existing assets that we use in this article consist of datasets, and existing AI models for imputation of outcomes. We list these assets and provide citations and license information below.
\begin{itemize}
    \item The skin disease experiment uses the dermatology dataset from the UCI Machine Learning Repository~\citep{dermatology_33}. This dataset is licensed under the CC-BY 4.0 license. It can be obtained at this \href{https://archive.ics.uci.edu/dataset/33/dermatology}{web address}.
    \item Both the data and transformer model pertaining to the gene expression experiment originally come from~\citet{vaishnav2022evolution}. These assets are hosted under a CC-BY 4.0 license. We obtained the data and transformer imputations from a derived source, that is the \href{https://github.com/aangelopoulos/ppi_py}{\texttt{ppi-py} repository}, which is under an MIT license. 
    \item The age-income regression experiment uses Folktables and ACS PUMS data. The Folktables package is introduced by~\citet{ding2021retiring}, and under an MIT license. We obtained the data and AI-imputations from a derived source, that is the \href{https://github.com/aangelopoulos/ppi_py}{\texttt{ppi-py} repository}, which is under an MIT license. The ACS PUMS data are subject to the U.S. Census Bureau's terms of use~\citep{uscensus2019acspums}.
    
\end{itemize}



\renewcommand{\refname}{Additional References for the Appendix}
\putbib
\end{bibunit}

\end{document}